%% file: main.tex
\documentclass{article}
\usepackage{iclr2027_conference,times}
\usepackage{amsmath,amssymb,amsthm}
\usepackage{graphicx}
\usepackage{booktabs,tabularx,array}
\usepackage{xcolor}
\usepackage{tikz}
\usetikzlibrary{arrows.meta}
\usepackage{hyperref}
\usepackage{url}

\input{macros}

\hypersetup{
 hidelinks,
 pdftitle={When to Stop Penalizing Actions? Martingale GRPO for VLA Reinforcement Learning},
 pdfauthor={Zongyao Zhao, Cheng-Jie Zhao, Ruihang Jiang, Bing Zheng, Zhaolin Wang, Yuanwei Liu}
}

\title{When to Stop Penalizing Actions? \\ Martingale GRPO for \\ VLA Reinforcement Learning}
\author{\small\textbf{Zongyao Zhao\textsuperscript{1}, Cheng-Jie Zhao\textsuperscript{1}, Ruihang Jiang\textsuperscript{1}, Bing Zheng\textsuperscript{2}, Zhaolin Wang\textsuperscript{1}, Yuanwei Liu\textsuperscript{1}} \\
\textsuperscript{1}The University of Hong Kong \\
\textsuperscript{2}Tsinghua University}
\iclrfinalcopy

\begin{document}
\maketitle
\lhead{Preprint}
\suppressfloats[t]

\begin{abstract}
\input{sections/abstract}
\end{abstract}

\input{sections/introduction}
\input{sections/problem_setup}
\input{sections/method}
\input{sections/experiments}

\input{sections/conclusion}
\label{maintext:end}

\clearpage
\input{sections/statements}
\bibliography{references}
\bibliographystyle{iclr2027_conference}

\clearpage
\appendix
\input{appendix/contents}
\input{appendix/related_work}
\input{appendix/appendix}

\end{document}

%% file: macros.tex
\newcommand{\ind}{\mathbf{1}}
\newcolumntype{Y}{>{\raggedright\arraybackslash}X}

\newcommand{\E}{\mathbb{E}}
\newcommand{\F}{\mathcal{F}}
\newcommand{\Var}{\operatorname{Var}}
\newcommand{\Cov}{\operatorname{Cov}}
\newcommand{\tr}{\operatorname{tr}}
\newcommand{\martgrpo}{\mbox{\textsc{MART-GRPO}}}
\newtheorem{theorem}{Theorem}
\newtheorem{proposition}[theorem]{Proposition}
\newtheorem{corollary}[theorem]{Corollary}

\theoremstyle{definition}

%% file: sections/abstract.tex
A failed vision-language-action (VLA) rollout does not imply that every action contributed to failure.
Yet in group-relative policy optimization (GRPO), a single trajectory-level advantage is assigned to all actions in a rollout.
Potentially useful recovery attempts can thus be penalized alongside the errors they aim to correct.
Ideal stopping conditions for failure feedback are characterized using success-probability and policy-score martingales.
At these stops, the removed gradient contribution has zero conditional expectation given the retained information.
Truncation therefore preserves the expectation of a reference GRPO group-gradient estimator and can reduce its covariance.
One such stop occurs when the conditional probability of success is zero, motivating error and recovery cues for selecting candidate feedback boundaries.
Martingale GRPO (\martgrpo) is introduced with a vision-language model (VLM) for event-guided truncation.
The VLM is fine-tuned on our annotated training set using low-rank adaptation (LoRA).
The last visible error without observed recovery is localized by the frozen VLM in complete rollout videos.
Failure feedback is retained through the selected error interval and removed afterward, while successful-rollout loss terms and original loss normalization are preserved.
For retrospective boundaries, changes in expectation and variance are analyzed to assess departures from ideal stopping.
Experiments with OpenVLA-OFT on RoboTwin 2.0 show that \martgrpo\ improves success rates over full-trajectory GRPO across multiple tasks.

%% file: sections/introduction.tex
\section{Introduction}
\label{sec:introduction}

Vision-language-action (VLA) models map visual observations and language instructions to actions~\citep{zitkovich2023rt2,kim2025openvla}. Pretrained policies are adapted through supervised fine-tuning on expert demonstrations~\citep{kim2025oft} and further optimized through reinforcement learning from environment interaction~\citep{lu2025vlarl,liu2025rlvla}. Group-relative policy optimization (GRPO)~\citep{shao2024deepseekmath} turns terminal task rewards into trajectory-level advantages by comparing rollouts collected for the same task. Each advantage is then applied across the rollout's action chunks.

A terminal failure does not imply that every action contributed to failure. For example, a policy may drop an object, attempt to regrasp it, and exhaust its interaction budget (Figure~\ref{fig:overview}). In a mixed-outcome group, the failed rollout's negative advantage penalizes both the error and the subsequent recovery attempts. This motivates the question: \emph{When to stop penalizing actions?}

\input{figures/overview}

Several methods already change which actions receive feedback, each at a different point in the pipeline. Temporal GRPO reconstructs stage-relative advantages but requires task-stage detection and rollout alignment~\citep{zhou2026temporal}. PCM allocates gradient computation through chunk sampling guided by phase-level success--failure divergence~\citep{bagaria2026pcm}. RedFlow's corrective targets require suitable successful matches in offline data~\citep{yan2026redflow}. In LLM reinforcement learning, negative-advantage updates are masked in $\mu$-GRPO using importance-ratio triggers~\citep{tian2026mugrpo}. ESPO terminates rollouts early~\citep{li2026espo}, whereas PS-PPO samples update prefixes with importance weighting to preserve the reference gradient expectation~\citep{hwang2026psppo}. Here, failed-rollout suffixes are masked using visible errors and recovery cues after complete execution. Group advantages and original loss normalization are retained without additional correction weights for truncation.

To identify a principled truncation boundary, we ask when future actions make no expected gradient contribution. At a fixed policy, the conditional success probability and the cumulative policy score are both martingales, and their increments carry that contribution. Together they define the \emph{remaining conditional credit}: the expected policy-gradient contribution of future actions given the observed history. For example, when failure is already determined by the observed history, the remaining credit is zero. If the decision to stop reads only the observed prefix and the remaining credit is zero, the removed failure contribution has zero conditional expectation. For conditionally independent rollouts whose normalization factors are measurable with respect to the retained information, the truncated GRPO group-gradient estimator is a conditional projection of the full reference estimator. It therefore has the same expectation and no greater covariance.

Motivated by this analysis, we propose \emph{Martingale GRPO} (\martgrpo) for event-guided failure-feedback selection. A vision-language model (VLM) selects feedback boundaries based on visible errors and whether recovery follows. The VLM is fine-tuned on error and recovery annotations and frozen during policy optimization. It reads each complete failed rollout and localizes the onset of the last visible error without observed recovery to a chunk interval. Later frames help interpret earlier events, so the boundary is selected retrospectively. The interval's right endpoint defines $B$: failure feedback is retained from chunk $0$ through $B$ and removed afterward. Successful-rollout loss terms are retained. A conditional residual relates these retrospective boundaries to the conditional projection above and bounds their effects on gradient expectation and total variance.

Our contributions are threefold.
\begin{itemize}
 \makeatletter\@beginparpenalty=10000\makeatother
 \item \textbf{A martingale analysis of failure feedback.} Ideal stopping rules are characterized through remaining conditional credit and a conditional projection of the complete GRPO group-gradient estimator. The projection preserves the reference expectation without increasing covariance, while residual and bias--variance analyses characterize the effects of retrospective boundaries.
 \item \textbf{Event-guided feedback truncation with \martgrpo.} A VLM is fine-tuned with LoRA on our annotated training set and frozen to localize the last visible error without observed recovery in complete rollouts. These event intervals are converted into failure-prefix masks that preserve successful-rollout loss terms, group advantages, and original loss normalization.
 \item \textbf{Empirical evaluation of policy performance and feedback selection.} Seven-task OpenVLA-OFT comparisons on RoboTwin 2.0 show higher mean success than Full GRPO, complemented by localization and feedback-use measurements. Two-task ablations favor event-guided truncation over uniform penalty scaling and random prefixes.
\end{itemize}

%% file: figures/overview.tex
\begin{figure}[t]
 \centering
 \includegraphics[width=\linewidth]{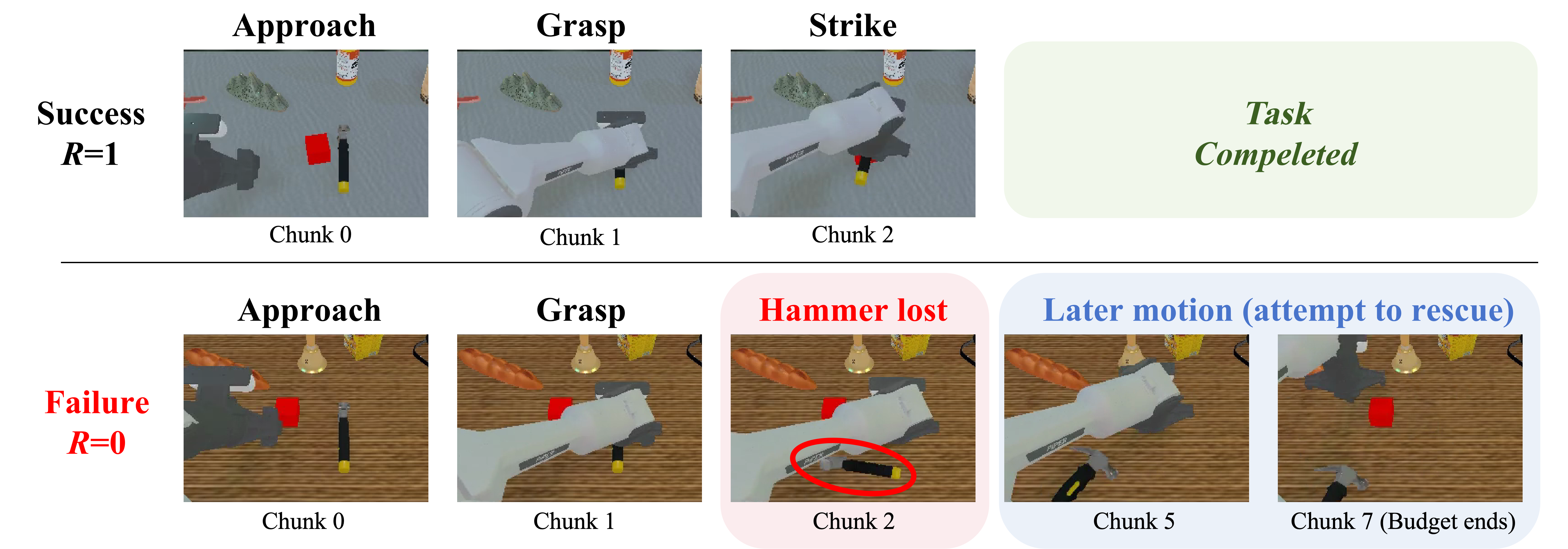}
 \caption{\textbf{Errors and recovery attempts within a failed rollout.} Selected RoboTwin hammer frames from two scenes, a success (top) and a failure (bottom). The red outline marks the hammer after it is lost at chunk 2, and the later frames show recovery attempts. In mixed-outcome groups, full GRPO assigns the error and subsequent recovery attempts the same negative advantage.}
 \label{fig:overview}
 \label{fig:event_example}
\end{figure}

%% file: sections/problem_setup.tex
\section{A Martingale Analysis of Failure Feedback}
\label{sec:setup}

We analyze feedback truncation through the gradient contribution of the discarded suffix. Under the conditions below, removing a contribution with zero conditional expectation given the retained information preserves the full-feedback estimator's expectation without increasing covariance. Two martingales make this statement precise for GRPO, and a projection residual then covers boundaries selected from complete rollouts.

\subsection{GRPO with a feedback mask}

We express feedback truncation as a mask on GRPO's trajectory-level advantage. Trajectory $i$ contains $K_i$ action chunks and terminal reward $R_i\in\{0,1\}$. History $h_{i,k}$ precedes chunk $a_{i,k}$, indexed from zero. A policy conditions on observations, task instructions, and available proprioception. For rollout group $g(i)$, the reference advantage is
\begin{equation}
 A_i=\frac{R_i-\overline R_{g(i)}}{s_{g(i)}+\epsilon_{\mathrm{num}}},\qquad
 \overline R_g=\frac{1}{|g|}\sum_{j\in g}R_j.
 \label{eq:group_advantage}
\end{equation}
Here $s_g$ is the group's reward standard deviation and $\epsilon_{\mathrm{num}}>0$ ensures numerical stability. The mask modifies the temporal support of $A_i$ as follows.
\begin{equation}
 \begin{aligned}
 \widetilde A_{i,k}&=w_{i,k}A_i,\qquad B_i\in\{-1,0,\ldots,K_i-1\},\\
 w_{i,k}&=\begin{cases}1,&R_i=1,\\ \ind\{k\le B_i\},&R_i=0.\end{cases}
 \end{aligned}
 \label{eq:prefix_mask}
\end{equation}
In the analysis, $B_i=-1$ represents an empty failure prefix. Let $\mathcal V_i$ be the valid action tokens and $N_i=|\mathcal V_i|$. Masking preserves the original $N_i$, since renormalizing by the retained tokens would scale up every surviving term. Section~\ref{sec:method} gives the clipped policy objective.

\subsection{Remaining conditional credit}

The ideal analysis concerns on-policy score gradients at a fixed differentiable collection policy $\pi_\theta$, before PPO clipping or optimizer transformations. Use a finite horizon $H$, padding after termination with zero-score dummy actions. The objective is undiscounted terminal success. The environment and initial distribution do not depend on $\theta$. Let $\F_k$ contain the complete pre-action history, including remaining budget, and let $\F_H$ contain the outcome. Assume common policy support, valid differentiation under expectation, and square-integrable scores
\begin{equation}
 P_k=\E[R\mid\F_k],\qquad
 Z_k=\nabla_\theta\log\pi_\theta(a_k\mid h_k),\qquad
 S_t=\sum_{k=0}^{t-1}Z_k.
 \label{eq:martingale_defs}
\end{equation}
The score identity $\E[Z_k\mid\F_k]=0$ makes $S_t$ a martingale, while $P_k$ is the Doob martingale of $R$~\citep{durrett2019probability}. Write $\Delta P_k=P_{k+1}-P_k$. Its expectation given the history and the selected action is the local action advantage, distinct from the group coefficient $A_i$ (Appendix~\ref{app:innovations}). This advantage records how the action moved the success probability, so an action inside a failed rollout can still carry positive credit.

Pairing their increments gives $\nabla J=\sum_k\E[Z_k\Delta P_k]$ for $J(\theta)=\E[R]$ by score centering~\citep{williams1992reinforce,sutton1999policy} (Appendix~\ref{app:innovations}). Summing the same products from chunk $t$ onward and conditioning on the observed history gives the remaining conditional credit
\begin{equation}
 \chi_t=\E[R(S_H-S_t)\mid\F_t]
 =\E\!\left[\sum_{k=t}^{H-1}Z_k\Delta P_k\,\middle|\,\F_t\right].
 \label{eq:remaining_credit}
\end{equation}
Holding the observed prefix fixed, $\chi_t$ is the first-order sensitivity of continuation success to the policy parameters (Appendix~\ref{app:remaining_credit}). With random initial conditions, $\E[\chi_0]=\nabla J$, while $\chi_0$ itself is conditional on $\F_0$. Contributions can cancel, so zero credit does not require every later action advantage to vanish.

\subsection{Conditional projection of a complete GRPO group}

We first consider rules that decide from the observed prefix. Let $T\in\{0,\ldots,H\}$ be a stopping time, with $\{T\le k\}\in\F_k$. It selects the last penalized failure chunk $B=T-1$. Successes retain their full scores, giving $\widetilde S_T=(1-R)S_T+RS_H$. Let $\mathcal K_T$ record the stopped history and final outcome, together with the complete trajectory on success. This retained information generally exceeds $\F_T$.

\begin{proposition}[A conditional projection characterization]
\label{prop:projection}
Under the score assumptions above,
\begin{equation}
 \chi_T=0\quad\Longleftrightarrow\quad
 \widetilde S_T=\E[S_H\mid\mathcal K_T].
 \label{eq:projection_characterization}
\end{equation}
\end{proposition}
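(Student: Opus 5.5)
The plan is to compute $\E[S_H\mid\mathcal K_T]$ explicitly on the success and failure events separately, and then show that the failure part matches $S_T$ exactly when $\chi_T=0$. The only tools needed are the martingale property of $S_t$ (from the score identity $\E[Z_k\mid\F_k]=0$), optional stopping at the bounded stopping time $T\le H$, and an elementary formula for conditioning on $\mathcal K_T$.

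\textbf{Step 1 (description of $\mathcal K_T$).} I would formalize the retained information as
\[
\mathcal K_T=\sigma\bigl(\F_T,\;R,\;R\cdot(\text{complete trajectory})\bigr),
\]
where $\F_T$ is the stopped $\sigma$-algebra. Its trace on $\{R=0\}$ is generated by $\F_T$ alone, and on $\{R=1\}$ it contains $\F_H$. For integrable $X$ this gives two facts.
\begin{itemize}
\item On success, $\ind\{R=1\}\E[X\mid\mathcal K_T]=\ind\{R=1\}X$ for every $\F_H$-measurable $X$.
\item On failure, the standard formula for conditioning on an atom-like event adjoined to a $\sigma$-algebra gives
\[
\ind\{R=0\}\E[X\mid\mathcal K_T]=\ind\{R=0\}\frac{\E[X(1-R)\mid\F_T]}{1-P_T}.
\]
This is well defined because $\{1-P_T=0\}\cap\{R=0\}$ is null, since $\E[(1-R)\ind\{P_T=1\}]=\E[(1-P_T)\ind\{P_T=1\}]=0$.
\end{itemize}
I would verify the failure formula directly by checking the defining integral identity on generating sets of the form $F\cap\{R=r\}$ with $F\in\F_T$.

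\textbf{Step 2 (reduction).} Apply Step 1 with $X=S_H$. On $\{R=1\}$ both sides of the claimed equality equal $S_H$, so only the failure event matters. Since $S_T$ is $\F_T$-measurable, on $\{R=0\}$ the claimed equality $\widetilde S_T=\E[S_H\mid\mathcal K_T]$ reads $\E[(1-R)(S_H-S_T)\mid\F_T]=0$ a.s. on $\{R=0\}$. Now $(1-P_T)$ divides this quantity: it vanishes where $P_T=1$. So the condition is equivalent to
\[
\E[(1-R)(S_H-S_T)\mid\F_T]=0\quad\text{a.s. (everywhere)}.
\]
To see the equivalence, note that an $\F_T$-measurable variable vanishing on $\{R=0\}$ also vanishes wherever $1-P_T>0$. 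Conversely, on $\{P_T=1\}$ the variable $(1-R)$ is a.s. zero, so the conditional expectation is zero there automatically.

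\textbf{Step 3 (optional stopping).} Because $S_t$ is a square-integrable martingale and $T\le H$ is a bounded stopping time, optional stopping gives $\E[S_H-S_T\mid\F_T]=0$. Hence
\[
\E[(1-R)(S_H-S_T)\mid\F_T]=-\E[R(S_H-S_T)\mid\F_T]=-\chi_T,
\]
which is the definition in \eqref{eq:remaining_credit} evaluated at $t=T$. Combined with Step 2, this yields \eqref{eq:projection_characterization}.

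\textbf{Main obstacle.} I expect the main difficulty to be Step 1: making precise that conditioning on $\mathcal K_T$ restricted to failures is the same as conditioning on $(\F_T,R=0)$, and handling the null set where $P_T=1$. A secondary technical point is justifying that $\chi_T$ evaluated at a random time coincides with $\E[R(S_H-S_T)\mid\F_T]$. I would handle this by decomposing over the events $\{T=t\}\in\F_t$ and applying the fixed-time identity on each event, using the finiteness of $H$.
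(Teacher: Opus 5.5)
Your proposal is correct and takes essentially the same route as the paper. Both arguments rest on the optional-sampling identity $\E[(1-R)(S_H-S_T)\mid\F_T]=-\chi_T$, treat the success branch as retained in full, and handle the failure branch by dividing by $1-P_T$ (the set $\{P_T=1\}$ carries no failure mass). Your Step~1 makes explicit the conditioning formula on $\mathcal K_T$ that the paper uses informally, while the paper gets the converse a little more directly by conditioning the projection identity on $\F_T\subseteq\mathcal K_T$.
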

Optional sampling gives $\E[(1-R)(S_H-S_T)\mid\F_T]=-\chi_T$, from which both directions follow (Appendix~\ref{app:projection}). The condition is imposed history by history. Preserving the unconditional expectation alone permits cancellation across histories and does not require $\chi_T=0$ (Appendix~\ref{app:counterexample}).

\begin{theorem}[Stopping failure feedback in a complete GRPO group]
\label{thm:group_stop}
Conditionally on the initial scene $X$, assume $m\ge2$ independent rollouts, each with $X$ in its local initial information and a local stopping time $T_i$ satisfying $\chi^{(i)}_{T_i}=0$. Let $\mathcal K=\bigvee_i\mathcal K^{(i)}_{T_i}$, and assume positive token counts $N_i$ are $\mathcal K$-measurable. The square-integrable group estimators satisfy
\begin{equation}
 G_g=\frac1m\sum_i\frac{A_i}{N_i}S_H^{(i)},\qquad
 G_{g,T}=\frac1m\sum_i\frac{A_i}{N_i}\widetilde S_{T_i}^{(i)}
        =\E[G_g\mid\mathcal K].
 \label{eq:group_stopping}
\end{equation}
Their expectations agree. With $D_g=G_g-G_{g,T}$,
\begin{equation}
 \Cov(G_g)-\Cov(G_{g,T})=\E[D_gD_g^\top]\succeq0.
 \label{eq:stopping_result}
\end{equation}
The same projection and covariance identity hold conditional on any positive-probability acceptance event determined by $X$ and the group outcomes.
\end{theorem}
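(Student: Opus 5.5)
The plan is to reduce the group statement to Proposition~\ref{prop:projection} applied rollout by rollout, and then use conditional independence given $X$ to pass from each rollout's own retained information $\mathcal K^{(i)}_{T_i}$ to the joint $\sigma$-field $\mathcal K=\bigvee_i\mathcal K^{(i)}_{T_i}$.

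\textbf{Step 1: each rollout's score projects correctly onto its own information.} For each $i$, the condition $\chi^{(i)}_{T_i}=0$ and Proposition~\ref{prop:projection} (applied conditionally on $X$) give
\begin{equation*}
\widetilde S^{(i)}_{T_i}=\E\bigl[S^{(i)}_H\mid\mathcal K^{(i)}_{T_i}\bigr].
\end{equation*}
Because $X$ lies in each rollout's local initial information, $X$ is $\mathcal K^{(i)}_{T_i}$-measurable.

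\textbf{Step 2: enlarge to the joint information $\mathcal K$.} Given $X$, the rollouts are independent, so $(S^{(i)}_H,\mathcal K^{(i)}_{T_i})$ is conditionally independent of $\bigvee_{j\ne i}\mathcal K^{(j)}_{T_j}$. By the standard conditional-independence property of conditional expectation (check it on generating rectangles, then extend by a $\pi$--$\lambda$ argument), we get
\begin{equation*}
\E\bigl[S^{(i)}_H\mid\mathcal K\bigr]=\E\bigl[S^{(i)}_H\mid\mathcal K^{(i)}_{T_i}\bigr]=\widetilde S^{(i)}_{T_i}.
\end{equation*}
I expect this to be the main technical point. The care needed is that the conditioning is on $X$ jointly: one must write $\mathcal K^{(j)}_{T_j}\vee\sigma(X)$ and use independence of the local randomness given $X$.

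\textbf{Step 3: pull out the coefficients and conclude.} Each outcome $R_j$ is in $\mathcal K^{(j)}_{T_j}$. Hence the group mean, the group standard deviation $s_g$, and every $A_i$ are $\mathcal K$-measurable, and the $N_i$ are $\mathcal K$-measurable by assumption. The coefficients $A_i/N_i$ are bounded, since $|A_i|\le 1/\epsilon_{\mathrm{num}}$ and $N_i\ge1$. So they pull out of the conditional expectation, and linearity gives $G_{g,T}=\E[G_g\mid\mathcal K]$. The tower property then gives equal expectations.

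For the covariance identity, write $G_g=G_{g,T}+D_g$ with $\E[D_g\mid\mathcal K]=0$ and $G_{g,T}$ $\mathcal K$-measurable and square-integrable. The cross term satisfies
\begin{equation*}
\E\bigl[G_{g,T}D_g^\top\bigr]=\E\bigl[G_{g,T}\,\E[D_g\mid\mathcal K]^\top\bigr]=0.
\end{equation*}
Since both estimators have the same mean, this yields $\Cov(G_g)=\Cov(G_{g,T})+\E[D_gD_g^\top]$, and the last term is positive semidefinite.

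\textbf{Step 4: the acceptance-event extension.} Let $E$ be a positive-probability event determined by $X$ and the outcomes $(R_i)$. Then $E\in\mathcal K$, so $\E[G_g\ind_E\mid\mathcal K]=\ind_E G_{g,T}$. Conditioning on $E$ therefore preserves the projection $G_{g,T}=\E_E[G_g\mid\mathcal K]$ under $\mathbb P(\cdot\mid E)$. The same orthogonality argument as in Step 3 then gives the covariance identity under $\mathbb P(\cdot\mid E)$.
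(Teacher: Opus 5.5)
Your proposal is correct and follows the paper's proof essentially step for step. It applies Proposition~\ref{prop:projection} to each rollout, then uses conditional independence given $X$ to replace $\mathcal K^{(i)}_{T_i}$ by the joint $\sigma$-field $\mathcal K$. It then pulls out the bounded, $\mathcal K$-measurable coefficients $A_i/N_i$, uses orthogonality of $G_{g,T}$ and $D_g$ for the covariance identity, and uses the fact that the acceptance event lies in $\mathcal K$ for the conditional extension. Your explicit justification of the conditional-independence step (noting that $\sigma(X)\subseteq\mathcal K^{(i)}_{T_i}$) and of the bound $|A_i|\le 1/\epsilon_{\mathrm{num}}$ spells out details that the paper states only briefly.
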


Since $\mathcal K$ contains all outcomes and the token counts $N_i$ are $\mathcal K$-measurable, the coefficients $A_i/N_i$ factor out of the conditional expectation. This accommodates the dependence among group advantages (Appendix~\ref{app:group_stop}).

The theorem preserves the \emph{population expectation} $\mu=\E[G_g]$. Group-relative weighting and token normalization can make $\mu$ differ from $\nabla J$. Figure~\ref{fig:theory_geometry}a illustrates the projection by grouping full-gradient realizations by retained information, with hollow markers showing conditional expectations.

The finite-batch estimate is the sample average $\overline G=n^{-1}\sum_jG_{g_j}$. On the same $n$ independent groups, ideal masking gives $\overline G_T=\overline G-\overline D$, where $\overline D=n^{-1}\sum_jD_{g_j}$. Since $\E[\overline D]=0$, the two estimates share expectation $\mu$ even when their realized values differ. Figure~\ref{fig:theory_geometry}b shows the contributions forming a batch estimate. Masking zeros the orange suffix terms while preserving the original GRPO coefficients and normalization. Large markers show the full and masked batch estimates, and the star marks $\mu$. Ideal masking reduces mean-squared error (MSE) relative to $\mu$ by $\E\|D_g\|^2/n$ on average over batches, while individual batch errors can increase or decrease.

\input{figures/theory_geometry}

\begin{corollary}[Failure certification as a special case]
\label{cor:failure_stop}
If $P_T=0$ on $\{T<H\}$, then $\chi_T=0$, $\widetilde S_T=S_T$, and $\mathcal K_T=\F_T$ up to null events. The group theorem therefore includes local failure-certifying stopping times.
\end{corollary}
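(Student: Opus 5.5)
The corollary has three claims, and I would prove them in the order $\chi_T=0$, then $\widetilde S_T=S_T$, then $\mathcal K_T=\F_T$. The first claim then lets Proposition~\ref{prop:projection} and Theorem~\ref{thm:group_stop} apply directly.

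\textbf{Step 1: $P_T=0$ forces $R=0$.} On $\{T<H\}$ we have $\E[R\mid\F_T]=P_T=0$. Since $R\in\{0,1\}$ is nonnegative, this gives $R=0$ almost surely on $\{T<H\}$. On $\{T=H\}$ the suffix is empty: $S_H-S_T=0$, and $\F_H$ already contains the outcome.

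\textbf{Step 2: $\chi_T=0$.} Work from the definition $\chi_T=\E[R(S_H-S_T)\mid\F_T]$ in \eqref{eq:remaining_credit}, evaluated at the stopping time. The event $\{T<H\}$ is $\F_T$-measurable, so I split on it.
\begin{itemize}
\item On $\{T<H\}$, Step~1 gives $R(S_H-S_T)=0$ almost surely.
\item On $\{T=H\}$, the product vanishes because $S_H-S_T=0$.
\end{itemize}
Square integrability of $R(S_H-S_T)$ follows from $|R|\le1$ and square-integrable scores, so the conditional expectation is well defined. Hence $\chi_T=0$.

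\textbf{Step 3: $\widetilde S_T=S_T$.} By definition, $\widetilde S_T=(1-R)S_T+RS_H=S_T+R(S_H-S_T)$. Step~2 showed that the correction term $R(S_H-S_T)$ is zero almost surely, so $\widetilde S_T=S_T$ up to null events.

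\textbf{Step 4: $\mathcal K_T=\F_T$ up to null events.} This is the step needing care, because $\mathcal K_T$ was described informally as the stopped history plus the final outcome, together with the full trajectory on success. I would formalize it as the $\sigma$-algebra generated by $\F_T$, by $R$, and by the complete trajectory restricted to $\{R=1\}$. I would then check each generator against the $\F_T$-measurable partition $\{T<H\}$, $\{T=H\}$.
\begin{itemize}
\item On $\{T<H\}$, $R=0$ almost surely, so $R$ coincides with an $\F_T$-measurable variable off a null set. The success event $\{R=1\}$ meets $\{T<H\}$ only in a null set, so the full-trajectory information on success adds nothing there.
\item On $\{T=H\}$, the stopped history is already the full history, including the outcome, so $\F_T$ contains every generator.
\end{itemize}
Completing with null sets then gives $\mathcal K_T=\F_T$.

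\textbf{Conclusion.} Each $T_i$ certifying failure has $\chi^{(i)}_{T_i}=0$ by Step~2. Such stopping times therefore satisfy the hypotheses of Theorem~\ref{thm:group_stop}, provided the theorem's other conditions (independence and $\mathcal K$-measurable token counts) hold. The projection is then simply $\E[S_H\mid\F_T]=S_T$, which is consistent with optional sampling.

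\textbf{Main obstacle.} I expect the obstacle to be the measure-theoretic bookkeeping of $\mathcal K_T$ and the stopped $\sigma$-algebra at random $T$. The Doob martingale must be evaluated at $T$ through $\F_T$, and the informal description of $\mathcal K_T$ must be pinned down precisely. Once that is done, the rest is immediate.
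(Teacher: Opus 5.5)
Your proposal is correct and follows the paper's proof essentially step for step: $R=0$ almost surely on $\{T<H\}$, an empty suffix at $T=H$, hence successes occur only at $T=H$, which gives $\chi_T=0$ and $\widetilde S_T=S_T$ and collapses $\mathcal K_T$ to $\F_T$, after which Theorem~\ref{thm:group_stop} applies. Your generator-by-generator check of $\mathcal K_T=\sigma(\F_T,R,\{R\ind_A:A\in\F_H\})$ on the partition $\{T<H\},\{T=H\}$ makes explicit the measurability step that the paper states in one line.
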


For stopping times satisfying $\chi_T=0$, earlier stopping has no larger covariance for the same coefficients and denominators (Appendix~\ref{app:nested_stop}). Certification is sufficient but unnecessary, since an action-independent outcome can have $P_T>0$ and zero remaining credit (Appendix~\ref{app:counterexample}).

\paragraph{Selecting candidate boundaries.}
Failure certification motivates looking for errors after which recovery is unlikely within the remaining budget. Visible errors and recovery observations serve as cues for candidate boundaries, whose statistical effects are characterized below.

\subsection{From ideal stops to retrospective boundaries}

Later observations can reveal whether an apparent error is followed by recovery. A boundary selected from the complete rollout can therefore depend on events beyond the retained prefix. Such selection can change the estimator's expectation even when each fixed candidate time has zero remaining credit (Appendix~\ref{app:counterexample}). We quantify this effect through a conditional projection residual.

Fix the policy, boundary rule, and reference sampling distribution. Let $G$ be the full group estimator, $G_B$ its masked version, and $D_B=G-G_B$. Let $\mathcal K_B$ contain the scene, all outcomes, the selected boundaries, actual denominators, retained failure prefixes, and complete success records. Thus $G_B$ is measurable in $\mathcal K_B$. The boundaries themselves must be included because they can convey future information. Define
\begin{equation}
 \eta_B=\E[D_B\mid\mathcal K_B],\qquad
 \epsilon_B^2=\E\|\eta_B\|^2.
 \label{eq:projection_residual}
\end{equation}

\begin{proposition}[Projection residual of a retrospective rule]
\label{prop:retrospective_projection}
For square-integrable $G$ and $G_B$,
\begin{equation}
 G_B=\E[G\mid\mathcal K_B]-\eta_B.
 \label{eq:retrospective_projection}
\end{equation}
Consequently $\eta_B=0$ is equivalent to this projection property and implies equal expectations and covariance nonincrease. For nonzero residual, write $V_B=\tr\Cov(G_B)$, $V_D=\tr\Cov(D_B)$, and $\Delta_V=\tr\Cov(G)-V_B$. Then
\begin{equation}
 \|\E[G_B]-\E[G]\|^2\le\epsilon_B^2,\qquad
 \Delta_V\ge V_D-2\sqrt{V_B\epsilon_B^2}.
 \label{eq:retrospective_bounds}
\end{equation}
\end{proposition}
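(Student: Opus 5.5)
The identity \eqref{eq:retrospective_projection} comes first, since everything else follows from it by elementary Hilbert-space arguments. Because $G_B$ is $\mathcal K_B$-measurable and square-integrable, conditioning the decomposition $G=G_B+D_B$ on $\mathcal K_B$ gives $\E[G\mid\mathcal K_B]=G_B+\E[D_B\mid\mathcal K_B]=G_B+\eta_B$, which rearranges to \eqref{eq:retrospective_projection}. Integrability of $D_B$ follows from that of $G$ and $G_B$. The equivalence is then immediate: $G_B=\E[G\mid\mathcal K_B]$ holds if and only if $\eta_B=0$ almost surely.

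In that case I would reuse the argument of Theorem~\ref{thm:group_stop}. The tower property gives $\E[G_B]=\E[G]$. Since $D_B$ is orthogonal in $L^2$ to every $\mathcal K_B$-measurable square-integrable vector, the cross term vanishes:
\begin{equation*}
\Cov(G)=\Cov(G_B)+\Cov(D_B)+\Cov(G_B,D_B)+\Cov(D_B,G_B),
\end{equation*}
where $\E[(G_B-\E G_B)D_B^\top]=\E[(G_B-\E G_B)\eta_B^\top]=0$. Hence $\Cov(G)-\Cov(G_B)=\Cov(D_B)=\E[D_BD_B^\top]\succeq0$, using $\E D_B=0$.

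For nonzero residual, the bias bound follows from the tower property and Jensen's inequality:
\begin{equation*}
\E[G]-\E[G_B]=\E[D_B]=\E[\eta_B],\qquad
\|\E\eta_B\|^2\le\E\|\eta_B\|^2=\epsilon_B^2.
\end{equation*}
The sign of the difference is irrelevant under the norm.

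For the variance bound, I would take traces in the same covariance expansion, which gives
\begin{equation*}
\Delta_V=V_D+2\,\tr\Cov(G_B,D_B).
\end{equation*}
The key step is to evaluate the cross term. Write $\bar G_B=G_B-\E G_B$. Then
\begin{equation*}
\tr\Cov(G_B,D_B)=\E\langle\bar G_B,D_B\rangle=\E\langle\bar G_B,\eta_B\rangle,
\end{equation*}
again by conditioning on $\mathcal K_B$, since $\bar G_B$ is $\mathcal K_B$-measurable. Cauchy--Schwarz then bounds this by $\sqrt{\E\|\bar G_B\|^2\,\E\|\eta_B\|^2}=\sqrt{V_B\epsilon_B^2}$ in absolute value. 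Substituting gives $\Delta_V\ge V_D-2\sqrt{V_B\epsilon_B^2}$.

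I expect the only delicate point to be making sure the cross term reduces to $\eta_B$ and not to $D_B$, since that reduction is what puts $\epsilon_B$ into the bound. It relies on $G_B$ being $\mathcal K_B$-measurable. This is exactly why $\mathcal K_B$ must include the selected boundaries and the actual denominators, so I would state that measurability explicitly.

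Centering $\eta_B$ would tighten the bound slightly, replacing $\epsilon_B^2$ with $\tr\Cov(\eta_B)$, but I would keep the stated uncentered form. No martingale structure is needed in this proposition; it is purely an $L^2$ projection statement.
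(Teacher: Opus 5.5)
Your proof is correct and follows essentially the same route as the paper: the identity from $\mathcal K_B$-measurability of $G_B$, tower plus Jensen for the expectation shift, and replacing $D_B$ by $\eta_B$ in the cross covariance before applying Cauchy--Schwarz. The only cosmetic difference is that the paper first bounds the cross term by $\sqrt{V_B\tr\Cov(\eta_B)}$ and then uses $\tr\Cov(\eta_B)\le\epsilon_B^2$, which is exactly the centering refinement you mention.
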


The proposition follows from conditional expectation and Cauchy--Schwarz (Appendix~\ref{app:retrospective_projection}) under a fixed unfiltered or accepted-group law. The ideal stops of Theorem~\ref{thm:group_stop} satisfy $\eta_B=0$, and the residual measures a rule's departure from their conditional projection property.

\paragraph{Statistical benefit and truncation cost.}
For a finite batch, estimation accuracy depends on the balance between variance reduction and a shift in expectation. Write $G_w=G-D$ for a masked estimator and its removed part, so that $G_B$ and $D_B$ above are one instance. Set $\mu=\E[G]$ and $d=\E[D]$. Here bias is measured relative to the reference expectation $\mu$.

With $\Delta_V=\tr\Cov(G)-\tr\Cov(G_w)$, the standard bias--variance decomposition gives, for averages of $n$ independent copies,
\begin{equation}
 \E\|\overline G_w-\mu\|^2-\E\|\overline G-\mu\|^2
 =\|d\|^2-\frac{\Delta_V}{n}.
 \label{eq:mse}
\end{equation}
For an ideal stop in Theorem~\ref{thm:group_stop}, $d=0$ and the MSE reduction is $\E\|D_g\|^2/n$, which is strict when the deleted component is nonzero. For a general boundary, strict MSE improvement is equivalent to $\Delta_V>n\|d\|^2$, the region above the diagonal in Figure~\ref{fig:theory_geometry}(c) (Proposition~\ref{prop:mse} in the appendix).

Equation~\ref{eq:mse} balances variance reduction against the squared expectation shift at a given batch size. Appendix~\ref{app:hindsight} compares complete-rollout selection with rules based on the observed prefix. The next section constructs boundaries from visible errors and recovery.

%% file: figures/theory_geometry.tex
\begin{figure}[!t]
 \centering
 \includegraphics[width=\linewidth]{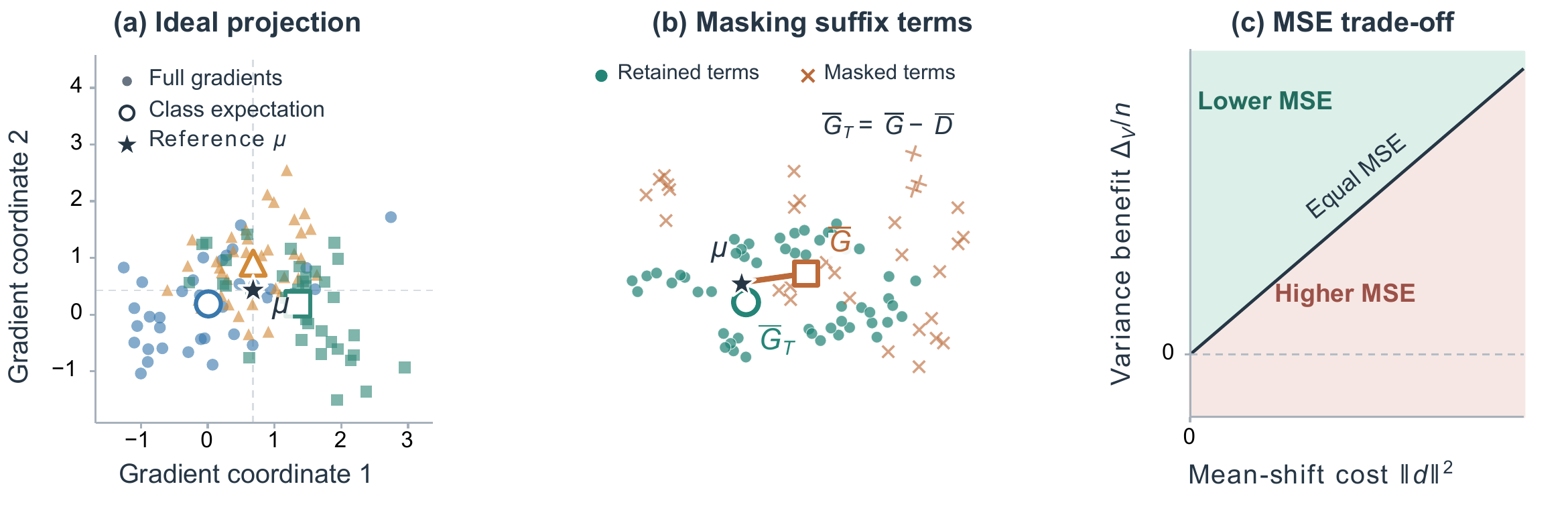}
 \caption{\textbf{Feedback truncation and gradient estimation.} Schematic views of (a) ideal conditional projection, (b) a finite-batch estimate before and after suffix masking, and (c) the trade-off between variance reduction and expectation shift.}
 \label{fig:theory_geometry}
\end{figure}

%% file: sections/method.tex
\par
\begin{samepage}
\section{Martingale GRPO}
\label{sec:method}

Section~\ref{sec:setup} identifies conditions for preserving the reference gradient expectation when removing a failure suffix. The failure-certification case (Corollary~\ref{cor:failure_stop}) motivates using visible errors and recovery as cues for candidate boundaries. \martgrpo\ retains feedback on localized errors and uses subsequent recovery observations to select the suffix to mask (Figure~\ref{fig:method}). A frozen VLM reviews each complete failed rollout and localizes the last visible error without observed recovery. A fixed mapping converts this interval into a failure-prefix mask for GRPO.

\end{samepage}

\input{figures/method}

\subsection{Learning to localize failure events}

\paragraph{Event supervision.}
The VLM input $x$ combines task instructions, three synchronized camera views, proprioceptive observations, and frame-to-chunk alignment. Structured targets record visible outcomes, error types, localization intervals, supporting frames, and observed recovery. The interval brackets the onset of the target error, since its exact chunk is uncertain.

The corpus contains 1,024 failed rollout videos with event annotations, supplemented by successful reference videos. Training, validation, and test partitions have disjoint task--scene identities. Annotation details and split counts are documented in Appendix~\ref{app:labels}.

\paragraph{VLM fine-tuning.}
A single LoRA adapter~\citep{hu2021lora} is trained with the Qwen3.8-27B vision-language backbone~\citep{qwen2026qwen38} frozen. For each structured response $y$, we average the negative log likelihood over response tokens, then over training examples:
\begin{equation}
 \mathcal L_{\mathrm{event}}(\phi)
 =-\frac{1}{|\mathcal D_{\mathrm{train}}|}
 \sum_{(x,y)\in\mathcal D_{\mathrm{train}}}
 \frac{1}{|y|}\sum_{v=1}^{|y|}
 \log p_\phi(y_v\mid x,y_{<v}).
 \label{eq:student}
\end{equation}
We select the checkpoint by validation task-macro interval intersection-over-union (IoU), breaking ties by lower response loss and then earlier update, and freeze it during RL. Appendix~\ref{app:vlm8_training} gives the training and evaluation protocol.

\subsection{Selecting the feedback boundary}

For a failed rollout, the boundary uses the following fields from the VLM's structured response:
\begin{equation}
 \widehat e_i=f_\phi(x_i)=(o_i,\mathrm{type}_i,[\ell_i,u_i],r_i,\mathcal I_i).
 \label{eq:events}
\end{equation}
Here $o_i$ is the predicted visible outcome, $r_i$ records subsequent recovery from the localized error, and $\mathcal I_i$ identifies supporting frames for inspection. Response validation checks the schema and requires any non-null interval to contain two integer indices satisfying $0\le\ell_i\le u_i<K_i$. Invalid responses and failed inference requests raise implementation errors (Appendix~\ref{app:event_rule}). For a valid response, truncation requires a predicted failure, a specific error type, a non-null interval, and an explicit report of no recovery. Other valid responses retain full feedback, which we call a semantic fallback:
\begin{equation}
 B_i=
 \begin{cases}
  u_i,&\text{a localized event passes the checks},\\
  K_i-1,&\text{a semantic fallback applies}.
 \end{cases}
 \label{eq:event_boundary}
\end{equation}
For a qualifying event, $[\ell_i,u_i]$ localizes the error onset, while feedback is retained on the full prefix $0,\ldots,B_i$ with $B_i=u_i$. This choice includes every candidate onset chunk. For example, $[1,2]$ in a four-chunk failed rollout retains chunks $0,1,2$, giving mask $(1,1,1,0)$ under Equation~\ref{eq:prefix_mask}.

\subsection{Policy optimization}

At each policy update, rollouts execute to their terminal condition or budget. Following Full GRPO, we retain groups containing both successes and failures and collect further groups to fill the batch. Terminal rewards determine group advantages. The frozen VLM and fixed boundary mapping determine masks for the new failed rollouts, while successes bypass event inference and retain all their loss terms.

Let $k(u)$ denote the chunk containing token $u$, and $\rho_{i,u}(\theta)$ the token probability ratio relative to the collection policy. For $M$ rollouts and clipping limits $1-\varepsilon_-,1+\varepsilon_+$, we optimize
\begin{equation}
 \mathcal J_w(\theta)=\frac{1}{M}\sum_{i=1}^{M}\frac{1}{N_i}
 \sum_{u\in\mathcal V_i}\min\!\left\{
 \rho_{i,u}\widetilde A_{i,k(u)},\,
 \operatorname{clip}(\rho_{i,u},1-\varepsilon_-,1+\varepsilon_+)
 \widetilde A_{i,k(u)}\right\}.
 \label{eq:ppo}
\end{equation}
Masks remain fixed and detached throughout each PPO update, and the original valid-token counts $N_i$ are retained. Deployment uses the actor alone. Appendix~\ref{app:feedback_normalization} defines the retained coefficient mass used to quantify this feedback.

\paragraph{Connection to the analysis.}
The VLM uses later observations to interpret errors and recovery, so $B$ is retrospective. Proposition~\ref{prop:retrospective_projection} characterizes these masks through a conditional projection residual. For these boundaries, the ideal projection guarantee requires $\eta_B=0$, an additional condition beyond visible-event localization. Policy comparisons in Section~\ref{sec:experiments} assess learning under the clipped PPO objective.

%% file: figures/method.tex
\begin{figure}[!ht]
 \centering
 \begin{tikzpicture}
  \node[anchor=south west,inner sep=0] (workflow) at (0,0)
   {\includegraphics[width=\linewidth]{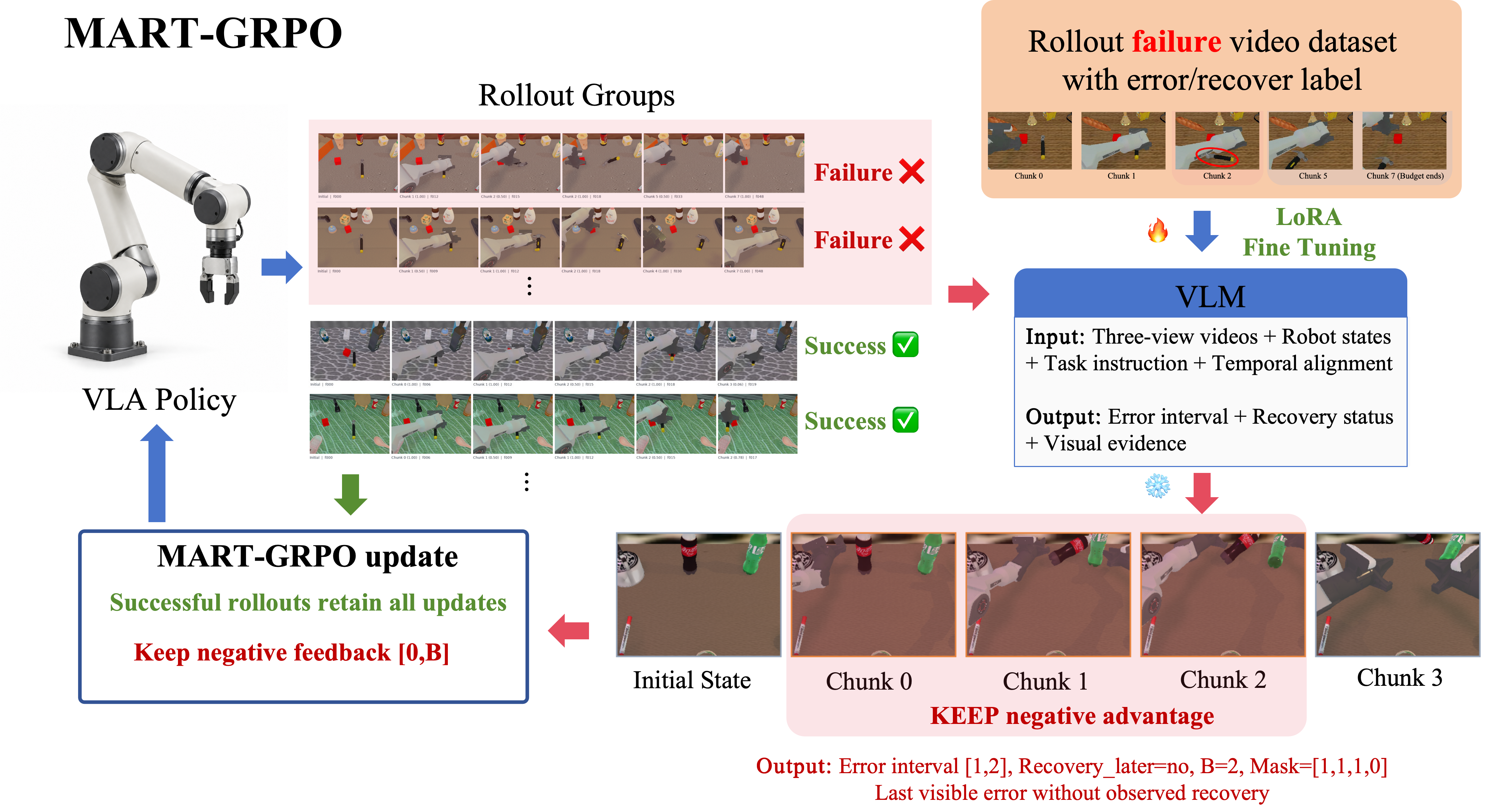}};
  \begin{scope}[x={(workflow.south east)},y={(workflow.north west)}]
   \fill[white] (0.067,0.241) rectangle (0.348,0.283);
   \node[inner sep=0,text={rgb,255:red,86;green,143;blue,46},
     font=\bfseries\fontsize{7}{8.5}\selectfont]
     at (0.205,0.258) {Successful-rollout loss terms retained};
   \fill[fill={rgb,255:red,248;green,203;blue,169}]
     (0.680,0.878) rectangle (0.972,0.981);
   \node[inner sep=0,align=center,font=\fontsize{9}{11}\selectfont]
     at (0.826,0.928) {Annotated rollout videos\\with error and recovery labels};
  \end{scope}
 \end{tikzpicture}
 \caption{\textbf{MART-GRPO: from error localization to feedback truncation.} A VLM is fine-tuned offline on event annotations, then frozen during RL. It localizes the last visible error without observed recovery in complete failed rollouts. The interval's right endpoint sets $B$: retain negative feedback through $B$ and mask the suffix. Success terms and original token denominators are preserved.}
 \label{fig:method}
\end{figure}

%% file: sections/experiments.tex
\section{Evaluation}
\label{sec:experiments}

We evaluate VLM event localization, VLA policy performance, and two ablations on the amount and temporal placement of failure feedback.

\subsection{VLM event localization}
\label{sec:prefix_evaluation}

We evaluate 220 unrecovered test failures with annotated intervals from the seven policy tasks, using the task--scene-disjoint split in Section~\ref{sec:method}. Interval intersection-over-union (IoU) compares discrete chunk sets. Endpoint accuracy measures agreement within a given chunk distance of the annotated right endpoint.

\begin{samepage}
Both metrics use predictions before semantic fallback, score missing intervals as zero, and average within tasks and then across tasks. Validation IoU selects update 304 (epoch 8). These scores measure exploratory agreement with the reference annotations (Appendix~\ref{app:vlm8_training}).
\end{samepage}

Figure~\ref{fig:vlm_localization} compares the base model with epochs 3, 6, and 8. Task-macro IoU rises from $28.5\%$ to $50.9\%$, $67.8\%$, and $72.1\%$, respectively. The selected checkpoint achieves $68.1\%$ exact endpoint agreement and $91.8\%$ agreement within one chunk. Per-task values appear in Table~\ref{tab:vlm8_localization}.

\input{figures/vlm_localization}

\subsection{VLA policy performance}
\label{sec:online_pilot}

\paragraph{Setup.}
We compare SFT, Full GRPO, and \martgrpo\ on seven RoboTwin 2.0 tasks~\citep{chen2025robotwin}, using the action-token implementation of OpenVLA-OFT from SimpleVLA-RL~\citep{kim2025oft,li2025simplevla}. Both RL methods start from the same task-specific SFT checkpoint. Each update uses 64 accepted groups of eight rollouts, mixed-outcome filtering with adaptive refill, global PPO minibatch 128, and the original token normalization. The actor receives a head-camera view and proprioception, while the frozen VLM reviews three views. Experiments run on servers equipped with eight H100 GPUs. Greedy evaluation uses 128 training-seed (Seen) and 128 held-out-seed episodes under the same task configuration every five updates. The primary endpoint is update 40. Full settings and scheduled evaluations appear in Appendices~\ref{app:grpo8} and~\ref{app:epoch8_policy}.

\input{tables/policy_results}

\paragraph{Results.}
At update 40, Table~\ref{tab:eight_task_policy} shows mean success rising from $65.01\%$ to $68.64\%$ ($+3.63$ points), with gains of $1.23$ points on seen seeds and $6.03$ on held-out seeds. Six tasks improve, led by Handover ($+8.98$) and Hammer ($+7.03$). Mean per-task best scores rise from $66.02\%$ to $69.87\%$. Container on Plate starts at $87.11\%$ SFT success, leaving limited headroom, and both methods peak at $94.92\%$. On Container on Plate, MART's pooled endpoint success is $1.95$ points below Full GRPO, with the difference confined to held-out seeds.

\subsection{Feedback ablations}
\label{sec:policy_ablations}

We conduct ablations on two tasks, Hammer and Handover. Two ablations use statistics frozen from all 40 updates of the corresponding MART run. Historical MART retains $49.2\%$ and $32.7\%$ of negative coefficient mass (Equation~\ref{eq:mass}), respectively, measuring feedback weight rather than gradient norm. \emph{Uniform scaling} uses these ratios ($0.4921$ and $0.3269$) to scale every failed-trajectory advantage. \emph{Random prefix} samples boundaries independently from the historical MART boundary distribution. Both ablations preserve success weights and original token denominators, use no online VLM, and follow the same 40-update budget and evaluation schedule. Protocol details appear in Appendix~\ref{app:two_task_ablation}.

Table~\ref{tab:policy_ablations} reports success at updates 10, 20, 30, and 40, and Figure~\ref{fig:hammer_handover_learning} compares final results alongside the SFT references. Uniform scaling reaches $77.73\%$ on Hammer and $40.62\%$ on Handover, improving over Full GRPO. MART remains higher by $3.52$ and $3.12$ points, respectively. Random prefix reaches $71.88\%$ and $41.41\%$, trailing MART by $9.38$ and $2.34$ points. VLM-guided truncation thus achieves higher success than simply reducing failure penalties on both tasks. Its advantage over random prefixes further supports selecting boundaries from visible events.

\input{tables/policy_ablations}
\input{figures/policy_endpoints}

%% file: figures/vlm_localization.tex
\begin{figure}[!t]
 \centering
 \includegraphics[width=\linewidth,trim=0bp 12bp 0bp 0bp,clip]{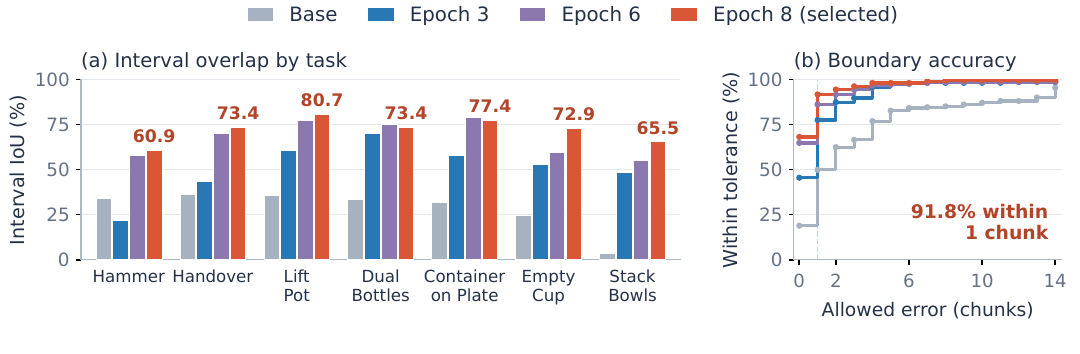}
 \caption{\textbf{VLM event localization.} Per-task interval IoU (a) and cumulative endpoint accuracy (b) on test failures. Epoch 8 is validation-selected.}
 \label{fig:vlm_localization}
\end{figure}

%% file: tables/policy_results.tex
\begin{table}[!t]
 \setlength{\abovecaptionskip}{0pt}
 \caption{\textbf{Seven-task policy performance.} Success (\%) on 128 training-seed (Seen) and 128 held-out-seed (Held-out) episodes. All pools both splits. RL scores are at update 40 except Best, the maximum All over updates $5,10,\ldots,40$. SFT is the initial score. $\Delta_{40}$ and $\Delta_{\mathrm{Best}}$ give MART-GRPO gains over Full GRPO (percentage points). Means and differences are computed before rounding.}
 \label{tab:eight_task_policy}
 \centering\small
 \setlength{\tabcolsep}{1.5pt}
 \begin{tabular*}{\linewidth}{@{\extracolsep{\fill}}lrrrrrrrrrrr@{}}
 \toprule
 & SFT & \multicolumn{4}{c}{Full GRPO~\citep{li2025simplevla}} & \multicolumn{4}{c}{MART-GRPO} & & \\
 \cmidrule(lr){3-6}\cmidrule(lr){7-10}
 Task & All & Seen & Held-out & All & Best & Seen & Held-out & All & Best & $\Delta_{40}$ & $\Delta_{\mathrm{Best}}$\\
 \midrule
Hammer & 30.47 & 78.12 & 70.31 & 74.22 & 74.61 & 82.81 & 79.69 & \textbf{81.25} & \textbf{81.25} & +7.03 & +6.64\\
Handover & 21.09 & 38.28 & 31.25 & 34.77 & 35.55 & 40.62 & 46.88 & \textbf{43.75} & \textbf{44.53} & +8.98 & +8.98\\
Lift pot & 5.47 & 60.16 & 56.25 & 58.20 & 58.20 & 57.03 & 60.16 & \textbf{58.59} & \textbf{58.59} & +0.39 & +0.39\\
Dual bottles & 20.70 & 50.00 & 39.84 & 44.92 & 44.92 & 53.91 & 43.75 & \textbf{48.83} & \textbf{51.17} & +3.91 & +6.25\\
Container on Plate & 87.11 & 92.97 & 94.53 & \textbf{93.75} & \textbf{94.92} & 92.97 & 90.62 & 91.80 & \textbf{94.92} & -1.95 & +0.00\\
Empty cup & 80.86 & 91.41 & 86.72 & 89.06 & 90.62 & 92.19 & 90.62 & \textbf{91.41} & \textbf{91.41} & +2.34 & +0.78\\
Stack bowls & 39.06 & 63.28 & 57.03 & 60.16 & 63.28 & 63.28 & 66.41 & \textbf{64.84} & \textbf{67.19} & +4.69 & +3.91\\
\midrule
Mean (7 tasks) & 40.68 & 67.75 & 62.28 & 65.01 & 66.02 & 68.97 & 68.30 & 68.64 & 69.87 & +3.63 & +3.85\\
 \bottomrule
 \end{tabular*}
\end{table}

%% file: tables/policy_ablations.tex
\begin{table}[!ht]
 \setlength{\abovecaptionskip}{0pt}
 \caption{\textbf{Feedback ablations.} Success (\%) pooled across both seed splits at updates 10, 20, 30, and 40. Column bests are bold. Reported differences are computed before rounding.}
 \label{tab:policy_ablations}
 \centering\small
 \setlength{\tabcolsep}{2pt}
 \begin{tabular*}{\linewidth}{@{\extracolsep{\fill}}llrrrrrrrr@{}}
 \toprule
 Method & Failure feedback & \multicolumn{4}{c}{Hammer} & \multicolumn{4}{c}{Handover}\\
 \cmidrule(lr){3-6}\cmidrule(l){7-10}
 & & 10 & 20 & 30 & 40 & 10 & 20 & 30 & 40\\
 \midrule
 Full GRPO & Full trajectory & 48.05 & 53.91 & 65.23 & 74.22 & \textbf{32.03} & 31.25 & 35.55 & 34.77\\
 Uniform scaling & Uniformly scaled & \textbf{51.56} & 63.28 & 65.23 & 77.73 & 29.69 & 33.98 & 37.89 & 40.62\\
 Random prefix & Random prefix & 51.17 & 67.19 & 75.00 & 71.88 & 25.78 & 32.03 & 38.67 & 41.41\\
 MART-GRPO & Event-guided prefix & 49.61 & \textbf{69.14} & \textbf{78.12} & \textbf{81.25} & 25.39 & \textbf{35.16} & \textbf{42.58} & \textbf{43.75}\\
 \bottomrule
 \end{tabular*}
\end{table}

%% file: figures/policy_endpoints.tex
\begin{figure}[!ht]
 \centering
 \includegraphics[width=0.90\linewidth]{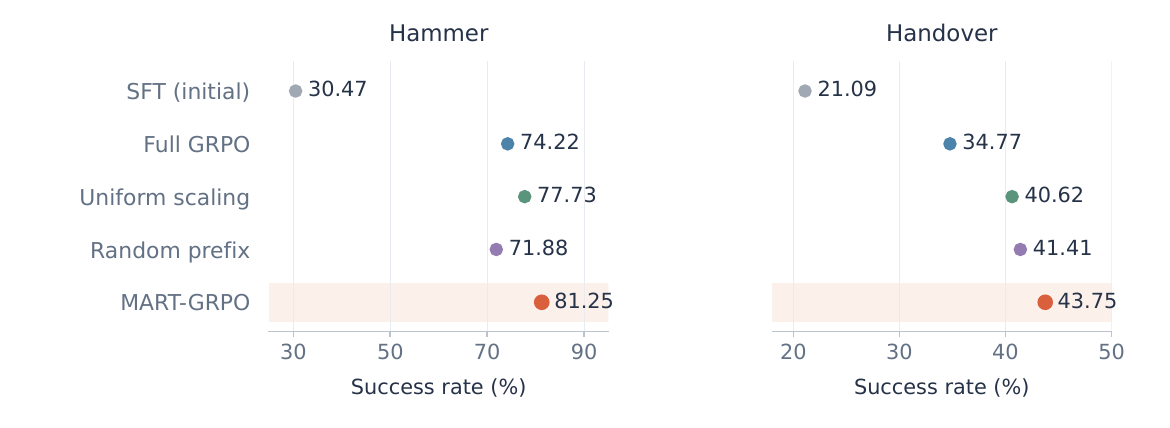}
 \caption{\textbf{Final performance on Hammer and Handover.} Success after 40 updates, with SFT references and task-specific horizontal scales.}
 \label{fig:hammer_handover_learning}
\end{figure}

%% file: sections/conclusion.tex
\section{Conclusion and Limitations}
\label{sec:conclusion}

We study where failure feedback should end in VLA reinforcement learning. Two martingales characterize remaining conditional credit and identify ideal stops that preserve the reference group-gradient expectation without increasing covariance. Guided by this analysis, \martgrpo\ selects feedback boundaries from visible errors and recovery in complete rollout videos. Retrospective residuals characterize departures of these boundaries from the ideal projection. Policy evaluations show success-rate improvements over Full GRPO across multiple tasks. On both ablation tasks, VLM-guided truncation achieves higher endpoint success than uniform penalty scaling and random prefix selection. A key open question is how reliably visible events track remaining credit as the policy evolves. Future work will examine this relationship across policy stages and develop prefix-based event predictors.

%% file: sections/statements.tex
\subsection*{AI use statement}

Generative AI tools assisted method implementation, experiment planning, and result analysis. They also supported language polishing, figure preparation, and checks of mathematical proofs. The authors reviewed the AI-assisted work and take full responsibility for the final manuscript.

\subsection*{Reproducibility statement}

Appendix~\ref{app:proofs} supplies the stopping and retrospective-projection proofs, together with the hindsight decomposition and the approximate-stopping bound. Finite-model checks supplement the proofs. Appendix~\ref{app:grpo8} gives the policy training and evaluation settings, and Appendix~\ref{app:labels} documents the boundary mapping and the annotation corpus. Appendix~\ref{app:epoch8_policy} provides scheduled policy evaluations and feedback statistics, and Appendix~\ref{app:two_task_ablation} details the ablation protocol.

%% file: appendix/related_work.tex
\section{Related Work}
\label{sec:related}
\label{app:related}

\paragraph{VLA models and reinforcement learning.}
Vision-language pretraining supports robot control in RT-2 and OpenVLA~\citep{zitkovich2023rt2,kim2025openvla}, while $\pi_0$ uses flow-based continuous action generation~\citep{black2025pi0}. OpenVLA-OFT studies efficient supervised adaptation~\citep{kim2025oft}. VLA-RL and SimpleVLA-RL develop RL pipelines for pretrained policies~\citep{lu2025vlarl,li2025simplevla}, with further work examining RL generalization~\citep{liu2025rlvla}. We select which action chunks receive terminal failure feedback while retaining the policy architecture and environment reward.

\paragraph{Temporal credit assignment.}
RUDDER redistributes delayed rewards through return prediction~\citep{arjona2019rudder}. Hindsight and counterfactual credit assignment use future information to construct action-credit estimates or baselines~\citep{harutyunyan2019hca,mesnard2020counterfactual}. Temporal GRPO reconstructs stage-relative advantages using task-stage detection and rollout alignment, while GiGPO estimates local credit at repeated anchor states~\citep{zhou2026temporal,feng2025gigpo}. These methods build a new credit signal. We keep the existing trajectory-level advantage and change only the actions it reaches.

\paragraph{Truncating negative feedback.}
A second group changes which update terms survive. PCM samples chunks using phase-level success--failure divergence to allocate gradient computation~\citep{bagaria2026pcm}. $\mu$-GRPO uses importance-ratio triggers to veto suffix updates, motivated by off-policy prefix mismatch~\citep{tian2026mugrpo}. ESPO uses a regret surrogate to terminate failing rollouts early and assign terminal failure~\citep{li2026espo}, whereas PS-PPO samples update prefixes with importance weighting to preserve the reference gradient expectation and study the variance--computation tradeoff~\citep{hwang2026psppo}. \martgrpo\ masks failed-rollout suffixes after complete execution, selecting boundaries from visible errors and recovery. Existing group advantages and loss normalization are retained without additional truncation correction weights.

\paragraph{Vision-language feedback and corrective learning.}
Vision-language feedback has been used to construct rewards from image--text similarity~\citep{rocamonde2024vlmreward}, video or textual demonstrations~\citep{sontakke2023roboclip}, and process-level predictions~\citep{lu2025vlarl}. REFLECT summarizes robot experiences for failure explanation and correction~\citep{liu2023reflect}. DLS uses privileged simulator states for progress-localized signals and directional shaping in flow-based VLA adaptation~\citep{chen2026dls}. RedFlow estimates chunk quality and derives corrective action targets from successful matches in offline data~\citep{yan2026redflow}. In \martgrpo, the fine-tuned VLM identifies the last visible error without observed recovery in complete videos. Its interval selects where an existing negative advantage ends, while the scalar task reward remains unchanged.

\paragraph{Gradient variance and normalization.}
REINFORCE and the policy-gradient theorem give the score-function foundation~\citep{williams1992reinforce,sutton1999policy}. Baselines and value-based advantage estimates address gradient variance~\citep{greensmith2004variance,schulman2016gae}, while GRPO and RLOO use groups of sampled outcomes~\citep{shao2024deepseekmath,ahmadian2024rloo}. Dr.\ GRPO analyzes normalization bias, and DAPO studies token-level aggregation and dynamic sampling~\citep{liu2025drgrpo,yu2025dapo}. These loss and normalization choices affect the reference update itself. We characterize failure-only truncation of complete GRPO groups through optional sampling and conditional-expectation projection~\citep{durrett2019probability}. The analysis covers dependent advantages, token denominators, and outcome filtering. Under the stopping and measurability conditions, the projection preserves the reference expectation without increasing covariance. This guarantee concerns the reference estimator before PPO clipping~\citep{schulman2017ppo}. Conditional projection residuals characterize expectation and variance changes under retrospective boundary selection.

%% file: appendix/appendix.tex
\section{Proofs and the Scope of the Martingale Argument}
\label{app:proofs}

\subsection{Notation and regularity}

We use the fixed-policy setting of Section~\ref{sec:setup}. A trajectory has finite horizon $H$ and binary terminal outcome $R$. Its distribution factors into an initial distribution, policy probabilities, and environment transitions, with only the policy factors depending on $\theta$. Differentiation is assumed to commute with expectation. For continuous actions, the policy has a differentiable density with common support. Scores are square-integrable, and terminal padding uses parameter-independent dummy actions with zero score.

Let $h_k$ denote the complete pre-action history used in the analysis, so that $\F_k=\sigma(h_k)$, and let $\F_H$ include $R$. The notation $\pi_\theta(a_k\mid h_k)$ denotes the action law conditional on this history, even when the policy uses only its observable component. Additional environment information may be included in $h_k$ provided that the conditional action law remains $\pi_\theta$. Thus $P_k=\E[R\mid\F_k]$ conditions on the analytical history, while the policy input specifies which part of that history is available to the agent. Under these assumptions, $Z_k$ is $\F_{k+1}$-measurable and satisfies $\E[Z_k\mid\F_k]=0$.

\subsection{Standard martingale identities}
\label{app:innovations}

\begin{proposition}[Outcome innovations and local credit]
\label{prop:innovations}
Under the fixed-policy score assumptions in Section~\ref{sec:setup}, $(P_k)_{k=0}^{H}$ is a bounded martingale. For $J(\theta)=\E[R]$ and every stopping time $T\le H$,
\begin{equation}
 \begin{aligned}
 \nabla J(\theta)&=\sum_{k=0}^{H-1}\E[Z_k\Delta P_k],\\
 \E\!\left[\sum_{k=T}^{H-1}(\Delta P_k)^2\,\middle|\,\F_T\right]
 &=P_T(1-P_T).
 \end{aligned}
 \label{eq:innovation_identity}
\end{equation}
\end{proposition}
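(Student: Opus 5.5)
The plan is to treat the three claims in order: the martingale property, the gradient identity, and the conditional quadratic-variation identity.

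\emph{Martingale property.} Since $R\in\{0,1\}$ is $\F_H$-measurable and $P_k=\E[R\mid\F_k]$ with $\F_k\subseteq\F_{k+1}$, the tower property gives $\E[P_{k+1}\mid\F_k]=P_k$. Boundedness $0\le P_k\le 1$ follows from $0\le R\le 1$. Moreover $P_H=R$, because $\F_H$ contains the outcome.

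\emph{Gradient identity.} The trajectory law factors so that only the policy terms depend on $\theta$, and differentiation commutes with expectation. The likelihood-ratio argument therefore gives $\nabla J=\E[R\,S_H]=\sum_{k}\E[R Z_k]$, where padded dummy actions contribute zero score. For each $k$, write $R=P_H=P_k+\Delta P_k+\sum_{j>k}\Delta P_j$ and treat the three parts separately.
\begin{itemize}
\item First, $P_k$ is $\F_k$-measurable and $\E[Z_k\mid\F_k]=0$, so $\E[Z_kP_k]=0$.
\item Second, $Z_k$ is $\F_{k+1}$-measurable, so for $j>k$ we get $\E[Z_k\Delta P_j]=\E[Z_k\,\E[\Delta P_j\mid\F_j]]=0$ by the martingale property.
\end{itemize}
Only $\E[Z_k\Delta P_k]$ survives. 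Integrability is ensured by square-integrable scores and bounded $P$.

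\emph{Quadratic-variation identity.} I would first prove the deterministic-time version and then pass to the stopping time $T$.
\begin{enumerate}
\item For $j<l$, the increment $\Delta P_j$ is $\F_l$-measurable and $\E[\Delta P_l\mid\F_l]=0$. Hence the cross terms vanish under conditioning on $\F_T$, on each event $\{T=t\}$ with $t\le j$.
\item This orthogonality gives $\E\bigl[\sum_{k\ge T}(\Delta P_k)^2\mid\F_T\bigr]=\E[(P_H-P_T)^2\mid\F_T]$.
\item Using $P_H=R$ and $R^2=R$, expand: $\E[R\mid\F_T]-2P_T\,\E[R\mid\F_T]+P_T^2=P_T-P_T^2=P_T(1-P_T)$. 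Here I use $\E[R\mid\F_T]=P_T$, which is optional sampling for the bounded martingale $P$ at the bounded stopping time $T$.
\end{enumerate}

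\emph{Main obstacle.} The delicate step is handling the random lower summation index $T$ rigorously. I would rewrite the sum as $\sum_{k}\ind\{T\le k\}(\Delta P_k)^2$ and the cross terms as $\ind\{T\le j\}\ind\{T\le l\}\Delta P_j\Delta P_l$. Since $\{T\le j\}\in\F_j\subseteq\F_l$, the factor multiplying $\Delta P_l$ is $\F_l$-measurable. The conditional cancellation can then be carried out by decomposing over $\{T=t\}\in\F_t$ for each finite value $t\in\{0,\ldots,H\}$. Everything stays finite because $H$ is finite and $P$ is bounded, so no uniform-integrability issues arise.
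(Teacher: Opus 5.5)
Your proposal is correct and follows essentially the same route as the paper. The martingale property comes from the tower property, the gradient identity from the likelihood ratio plus score centering (your telescoping of $R$ into later increments is the paper's one-step substitution $\E[RZ_k]=\E[P_{k+1}Z_k]$ written out), and the quadratic-variation identity from conditional orthogonality of increments, extended to the stopping time by partitioning over $\{T=t\}$, followed by $R^2=R$.
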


\begin{proof}
Since $0\le R\le1$, each $P_k$ is integrable and lies in $[0,1]$. The tower property gives
\begin{equation}
 \E[P_{k+1}\mid\F_k]=\E[\E[R\mid\F_{k+1}]\mid\F_k]=P_k.
\end{equation}
Moreover $P_H=R$, because the terminal outcome is observed at $H$. The likelihood-ratio identity yields
\begin{equation}
 \nabla J(\theta)=\E\!\left[R\sum_{k=0}^{H-1}Z_k\right].
\end{equation}
For each $k$, measurability of $Z_k$ with respect to $\F_{k+1}$ and the score identity imply
\begin{align}
 \E[RZ_k]&=\E[P_{k+1}Z_k],\\
 \E[P_kZ_k]&=\E[P_k\E[Z_k\mid\F_k]]=0.
\end{align}
Subtracting gives the first identity by a conditional-expectation substitution at the fixed policy parameter.

For a deterministic $t$, increments $\Delta P_k$ after $t$ are mutually orthogonal conditional on $\F_t$. For $t\le j<k$, the tower property gives
\begin{equation}
 \E[\Delta P_j\Delta P_k\mid\F_t]
 =\E[\Delta P_j\E[\Delta P_k\mid\F_k]\mid\F_t]=0.
\end{equation}
The same equality at a bounded stopping time follows by partitioning over $\{T=t\}$. Expanding $R-P_T=\sum_{k=T}^{H-1}\Delta P_k$ therefore gives
\begin{equation}
 \E\!\left[\sum_{k=T}^{H-1}(\Delta P_k)^2\,\middle|\,\F_T\right]
 =\E[(R-P_T)^2\mid\F_T]=P_T-P_T^2,
\end{equation}
where $R^2=R$ is used in the last step. On any event where $P_T=0$, nonnegativity of $R$ implies $R=0$ almost surely conditional on $\F_T$.
\end{proof}

\paragraph{Action-conditioned credit.}
To interpret $\Delta P_k$, define $Q^\pi(h,a)=\E[R\mid h,a]$ for the undiscounted terminal-success objective. Since $\F_k=\sigma(h_k)$ and the action $a_k$ is observed by $k+1$, the tower property gives
\begin{equation}
 \E[P_{k+1}\mid\F_k,a_k]=\E[R\mid\F_k,a_k]=Q^\pi(h_k,a_k).
\end{equation}
Subtracting the current success probability $P_k$ yields the local action advantage
\begin{equation}
 \mathcal A^\pi(h_k,a_k)=Q^\pi(h_k,a_k)-P_k
 =\E[\Delta P_k\mid\F_k,a_k].
 \label{eq:action_advantage}
\end{equation}
Conditioning on $\F_k$ gives $\E[\mathcal A^\pi(h_k,a_k)\mid\F_k]=0$. Because $Z_k$ is a function of $(h_k,a_k)$, its expected credit is $\E[Z_k\mathcal A^\pi(h_k,a_k)]=\E[Z_k\Delta P_k]$. This connects each term in the gradient identity to the action's expected effect on terminal success.

\paragraph{Policy-relative success probability.}
The process $P_k$ is defined by the collection policy, the analytical history, and the remaining budget. Accordingly, $P_T=0$ certifies failure under that continuation law. Its first-order interpretation uses the common policy support assumed above.

\subsection{Remaining conditional credit}
\label{app:remaining_credit}

The same score substitution can be conditioned on an earlier history. For deterministic $t\le k$, score centering and the measurability of $Z_k$ at $k+1$ give
\begin{equation}
 \begin{aligned}
 \E[RZ_k\mid\F_t]&=\E[P_{k+1}Z_k\mid\F_t],\\
 \E[P_kZ_k\mid\F_t]&=0.
 \end{aligned}
\end{equation}
Subtracting the two equalities and summing over $k\ge t$ proves Equation~\ref{eq:remaining_credit}. Partitioning over $\{T=t\}$ extends this identity to a bounded stopping time. Optional sampling centers the remaining score sum, so
\begin{equation}
 \begin{aligned}
 \E[S_H-S_T\mid\F_T]&=0,\\
 \chi_T&=\E[(R-P_T)(S_H-S_T)\mid\F_T].
 \end{aligned}
\end{equation}
Thus $\chi_T$ is the conditional covariance between terminal success and the remaining score sum. Since it sums vector contributions, later action credits may cancel.

\paragraph{Continuation sensitivity and initial conditioning.}
For a fixed complete prefix $h_t$, let $V_\vartheta(h_t)$ be the success probability when future actions are sampled from $\pi_\vartheta$. The recorded prefix and conditional environment law are held fixed. Under the differentiation assumptions above, the suffix likelihood-ratio identity gives
\begin{equation}
 \begin{aligned}
 \left.\nabla_\vartheta V_\vartheta(h_t)\right|_{\vartheta=\theta}
 &=\E_\theta\!\left[R\sum_{k=t}^{H-1}Z_k\,\middle|\,h_t\right]\\
 &=\chi_t(h_t).
 \end{aligned}
 \label{eq:continuation_sensitivity}
\end{equation}
This derivative measures local sensitivity at the current policy parameter, so $\chi_t=0$ can arise through cancellation or stationarity. At the initial time, $\F_0=\sigma(X)$ makes $\chi_0$ the gradient conditional on scene $X$. Averaging over the parameter-independent initial distribution then gives the population gradient $\E[\chi_0]=\nabla J$.

\subsection{Proof of Proposition~\ref{prop:projection}}
\label{app:projection}

Fix a stopping time $T$ for the local filtration. To retain the stopped history on failure and the complete history on success, define
\begin{equation}
 \mathcal K_T=\sigma\!\left(\F_T,\,R,\,
       \{R\ind_A:A\in\F_H\}\right).
 \label{eq:retained_information}
\end{equation}
On failure, the last collection is identically zero, leaving the stopped history and failure label. On success, it reveals the complete trajectory. Therefore $\widetilde S_T=(1-R)S_T+RS_H$ is $\mathcal K_T$-measurable. This information set records what is retained after execution, while the stopping decision itself is determined by the local prefix.

\begin{proof}
Bounded optional sampling~\citep[Section~4.8]{durrett2019probability} yields
\begin{equation}
 \E[(1-R)(S_H-S_T)\mid\F_T]=-\chi_T.
 \label{eq:conditional_removed_score}
\end{equation}
Suppose $\chi_T=0$. At histories with $P_T<1$,
\begin{equation}
 \E[S_H-S_T\mid\F_T,R=0]=\frac{-\chi_T}{1-P_T}=0.
\end{equation}
Histories with $P_T=1$ have no failure branch. Hence the removed score has zero conditional mean given $\mathcal K_T$ on failure, and is identically zero on success. Since $S_H-\widetilde S_T=(1-R)(S_H-S_T)$, this proves
\begin{equation}
 \E[S_H\mid\mathcal K_T]=\widetilde S_T.
\end{equation}
Conversely, the projection implies $\E[S_H-\widetilde S_T\mid\mathcal K_T]=0$. Conditioning on $\F_T\subseteq\mathcal K_T$ and using Equation~\ref{eq:conditional_removed_score} gives $\chi_T=0$.
\end{proof}

\paragraph{Weighted estimators and scope of necessity.}
For a bounded $\mathcal K_T$-measurable coefficient $C$, the condition gives
\begin{equation}
 \begin{aligned}
 G&=CS_H,\qquad G_T=C\widetilde S_T,\\
 G_T&=\E[G\mid\mathcal K_T].
 \end{aligned}
\end{equation}
Square integrability then gives expectation preservation and covariance reduction. Necessity in Proposition~\ref{prop:projection} concerns the unweighted score, since a zero coefficient can hide nonzero conditional credit in a weighted estimator. Unconditional expectation preservation is weaker because conditional shifts can cancel across histories.

\subsection{Proof of Theorem~\ref{thm:group_stop}}
\label{app:group_stop}

Let $X$ denote the shared initial scene. The complete trajectory sigma-fields are conditionally independent given $X$. Each local filtration contains $X$ and that trajectory's history, without revealing sibling futures. Each $T_i\le H$ is a stopping time for its local filtration, with $\chi^{(i)}_{T_i}=0$. The $N_i$ are positive integer token counts known in $\mathcal K=\bigvee_i\mathcal K^{(i)}_{T_i}$.

\begin{proof}
Every $R_i$ is retained in $\mathcal K$, so each bounded GRPO coefficient $A_i$ in Equation~\ref{eq:group_advantage} is measurable there. This shared information accommodates the dependence among group advantages.

Conditional independence of the local trajectories, followed by Proposition~\ref{prop:projection}, gives
\begin{equation}
 \begin{aligned}
 \E[S_H^{(i)}-\widetilde S_{T_i}^{(i)}\mid\mathcal K]
 &=\E[S_H^{(i)}-\widetilde S_{T_i}^{(i)}\mid\mathcal K^{(i)}_{T_i}]\\
 &=0.
 \end{aligned}
 \label{eq:joint_score_centering}
\end{equation}
The first equality uses the fact that each sibling's retained record is a function of its own trajectory and $X$. Taking the known coefficients $A_i/N_i$ outside the conditional expectation and summing proves $G_{g,T}=\E[G_g\mid\mathcal K]$. With $D_g=G_g-G_{g,T}$ and $\mu_g=\E[G_g]$, conditional centering implies
\begin{equation}
 \E[(G_{g,T}-\mu_g)D_g^\top]
 =\E[D_g(G_{g,T}-\mu_g)^\top]=0.
\end{equation}
Expanding the covariance gives
\begin{equation}
 \begin{aligned}
 \E[G_{g,T}]&=\E[G_g],\\
 \Cov(G_g)-\Cov(G_{g,T})&=\E[D_gD_g^\top]\succeq0.
 \end{aligned}
\end{equation}

Let $\mathcal E$ be an acceptance event determined by $X$ and $(R_1,\ldots,R_m)$, with $\Pr(\mathcal E)>0$. Since $\mathcal E\in\mathcal K$, the same projection holds under $\Pr(\cdot\mid\mathcal E)$. Therefore
\begin{equation}
 \begin{aligned}
 \E[G_{g,T}\mid\mathcal E]&=\E[G_g\mid\mathcal E],\\
 \Cov(G_g\mid\mathcal E)-\Cov(G_{g,T}\mid\mathcal E)
 &=\E[D_gD_g^\top\mid\mathcal E]\succeq0.
 \end{aligned}
\end{equation}
The reference distribution on both sides is the same accepted-group distribution.
\end{proof}

\paragraph{Retained normalization.}
Measurability of $N_i$ allows its reciprocal to be absorbed into the coefficient. Successful trajectories retain their complete records, so their token counts may depend on the completed trajectory. For failures, a fixed-length protocol or a count determined by the stopped record and outcome satisfies the same requirement. A future-dependent failure count introduces additional information, under which the removed score would have to be centered again. The theorem therefore uses the actual token count known in the retained record.

\paragraph{Binary coefficients and action credit.}
The relation to local action credit can also be made explicit. Let $U$ collect sibling outcomes, which are independent of the local trajectory conditional on $X$. Including $U$ in the initial analytical history preserves score centering. For a bounded coefficient $C=c(R,U)$, set $c_0(U)=c(0,U)$ and $\delta c(U)=c(1,U)-c(0,U)$. The binary outcome then gives $C=c_0(U)+\delta c(U)R$. Substituting the action-conditioned advantage and using score centering gives
\begin{equation}
 \E[CZ_k\mid\F_k]
 =\delta c(U)\E[Z_k\mathcal A^\pi(h_k,a_k)\mid\F_k].
 \label{eq:group_credit_bridge}
\end{equation}
Thus the coefficient difference $\delta c(U)$ weights local action credit. This identity uses the unfiltered on-policy law, whereas the accepted-group result above follows from conditioning the established projection. The two steps account separately for coefficient dependence and group selection.

\subsection{Proof of Corollary~\ref{cor:failure_stop}}

\begin{proof}
On $\{T<H\}$, the condition $P_T=0$ and nonnegativity of $R$ imply $R=0$ almost surely. Hence $\chi_T=0$ there. At $T=H$ the suffix is empty. Moreover, successes have $T=H$, so $\widetilde S_T=S_T$. The outcome is already $\F_T$-measurable, and any additionally retained successful history is already complete. Thus $\mathcal K_T=\F_T$ up to null events. Applying Theorem~\ref{thm:group_stop} proves the corollary.
\end{proof}

For comparison, suppose that terminal success is determined by an independent fair coin revealed at $H$. At every earlier history, $P_t=1/2$, while independence and score centering give $\chi_t=0$. Thus zero remaining credit can hold before the outcome is determined.

\subsection{Ordering ideal stopping rules}
\label{app:nested_stop}

\begin{corollary}[Ordering ideal stopping rules]
\label{cor:earliest_stop}
Consider two rules satisfying Theorem~\ref{thm:group_stop}, with the same coefficients and denominators and $T_{i,1}\le T_{i,2}$ for every trajectory. Their group estimators obey
\begin{equation}
 \Cov(G_{g,T_1})\preceq\Cov(G_{g,T_2}).
 \label{eq:nested_stopping}
\end{equation}
\end{corollary}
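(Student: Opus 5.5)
The approach is to show that the earlier-stopped estimator is a conditional projection of the later-stopped one, and then reuse the covariance identity of Theorem~\ref{thm:group_stop}. Write $\mathcal K_1=\bigvee_i\mathcal K^{(i)}_{T_{i,1}}$ and $\mathcal K_2=\bigvee_i\mathcal K^{(i)}_{T_{i,2}}$. The plan has three steps, in this order:
\begin{itemize}
 \item[(i)] establish the inclusion $\mathcal K_1\subseteq\mathcal K_2$;
 \item[(ii)] deduce $G_{g,T_1}=\E[G_{g,T_2}\mid\mathcal K_1]$;
 \item[(iii)] conclude with the Pythagorean covariance decomposition.
\end{itemize}

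For step (i), I would argue trajectory by trajectory using Equation~\ref{eq:retained_information}. Since $T_{i,1}\le T_{i,2}$ are stopping times for the same local filtration, the standard inclusion of stopped sigma-fields gives $\F_{T_{i,1}}\subseteq\F_{T_{i,2}}$. The outcome $R_i$ and the success-revealing collection $\{R_i\ind_A:A\in\F_H\}$ appear in both retained sets. Therefore $\mathcal K^{(i)}_{T_{i,1}}\subseteq\mathcal K^{(i)}_{T_{i,2}}$, and taking joins over $i$ gives $\mathcal K_1\subseteq\mathcal K_2$.

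For step (ii), Theorem~\ref{thm:group_stop} applied to each rule gives $G_{g,T_1}=\E[G_g\mid\mathcal K_1]$ and $G_{g,T_2}=\E[G_g\mid\mathcal K_2]$. The corollary assumes the same coefficients and denominators for both rules. This matters because the coefficients $A_i/N_i$ must be $\mathcal K_1$-measurable; for the $A_i$ this holds since all outcomes are retained. By the tower property,
\[
 \E[G_{g,T_2}\mid\mathcal K_1]=\E\bigl[\E[G_g\mid\mathcal K_2]\bigm|\mathcal K_1\bigr]=\E[G_g\mid\mathcal K_1]=G_{g,T_1}.
\]

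For step (iii), set $D_{12}=G_{g,T_2}-G_{g,T_1}$. Then $\E[D_{12}\mid\mathcal K_1]=0$, and $G_{g,T_1}$ is $\mathcal K_1$-measurable and square integrable. Hence the cross terms $\E[(G_{g,T_1}-\mu)D_{12}^\top]$ vanish, exactly as in the proof of Theorem~\ref{thm:group_stop}. This yields
\[
 \Cov(G_{g,T_2})-\Cov(G_{g,T_1})=\E[D_{12}D_{12}^\top]\succeq0.
\]
The same argument also works under any acceptance event $\mathcal E$ determined by $X$ and the outcomes, because $\mathcal E\in\mathcal K_1$.

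The main obstacle is step (i), specifically the measurability of the denominators $N_i$. The corollary's hypothesis of shared denominators requires each $N_i$ to be $\mathcal K_1$-measurable, not merely $\mathcal K_2$-measurable. If a failure's token count depended on the prefix beyond $T_{i,1}$, the tower step would fail. I would therefore read ``satisfying Theorem~\ref{thm:group_stop} with the same denominators'' as asserting $\mathcal K_1$-measurability of the $N_i$, noting that this covers the fixed-length protocol used in practice. I would also check carefully that $\F_{T_1}\subseteq\F_{T_2}$ holds under the finite-horizon padding convention. This is routine, since $\{T_1\le k\}\cap A\in\F_k$ for $A\in\F_{T_1}$ and $\{T_2\le k\}\subseteq\{T_1\le k\}$.
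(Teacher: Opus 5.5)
Your proposal is correct and follows the paper's proof essentially step for step. Nested stopped records, sharing the outcome and success information, give $\mathcal K_1\subseteq\mathcal K_2$; the tower property then gives $\E[G_{g,T_2}\mid\mathcal K_1]=G_{g,T_1}$, and the orthogonal covariance decomposition finishes the argument, including the same extension to acceptance events in $\mathcal K_1$. The $\mathcal K_1$-measurability of the $N_i$ that you flag is already part of the hypothesis that the earlier rule satisfies Theorem~\ref{thm:group_stop}; the paper phrases it as coefficients measurable in both information sets.
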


\begin{proof}
For $T_{i,1}\le T_{i,2}$, a later stopped prefix determines the earlier stopping record. Both rules retain the final outcome and complete successful trajectories. Therefore $\mathcal K^{(i)}_{T_{i,1}}\subseteq\mathcal K^{(i)}_{T_{i,2}}$, and their joint information sets satisfy $\mathcal K_1\subseteq\mathcal K_2$. With the same full estimator and coefficients measurable in both information sets, the tower property gives
\begin{equation}
 \begin{aligned}
 \E[G_{g,T_2}\mid\mathcal K_1]&=G_{g,T_1},\\
 \Cov(G_{g,T_2})-\Cov(G_{g,T_1})
 &=\E[(G_{g,T_2}-G_{g,T_1})(G_{g,T_2}-G_{g,T_1})^\top]\succeq0.
 \end{aligned}
\end{equation}
The same argument holds under a shared acceptance event in $\mathcal K_1$.
\end{proof}

\paragraph{Earliest zero-credit stop.}
For each trajectory, define
\begin{equation}
 \begin{aligned}
 T_*&=\min\bigl(\{t\in\{1,\ldots,H-1\}:\chi_t=0\}\cup\{H\}\bigr),\\
 B_*&=T_*-1.
 \end{aligned}
 \label{eq:earliest_stop}
\end{equation}
Adaptedness of $\chi_t$ makes this first zero, with fallback $H$, a stopping time. It precedes every stop with $T\ge1$ and $\chi_T=0$. Within the class satisfying Theorem~\ref{thm:group_stop} with the same coefficients and denominators, the ordering result therefore favors earlier stops. Whenever the original denominators are measurable in the joint information retained at $T_*$, this rule belongs to that class and attains its minimum covariance. The construction uses exact conditional credit at the fixed collection policy.

\subsection{Proof of Proposition~\ref{prop:retrospective_projection}}
\label{app:retrospective_projection}

All expectations below use the same fixed reference law, which may be the law conditional on group acceptance.

\begin{proof}
Since $G_B$ is $\mathcal K_B$-measurable,
\begin{equation}
 \E[G\mid\mathcal K_B]=G_B+\E[D_B\mid\mathcal K_B]=G_B+\eta_B.
\end{equation}
Taking expectations gives $\E[G_B]-\E[G]=-\E[\eta_B]$, and Jensen's inequality bounds the squared expectation shift. Moreover,
\begin{equation}
 \Cov(G_B,D_B)=\Cov(G_B,\eta_B),
\end{equation}
because $D_B-\eta_B$ has conditional mean zero given $\mathcal K_B$. Expanding $\Cov(G_B+D_B)$ therefore yields
\begin{equation}
 \begin{aligned}
 \Cov(G)-\Cov(G_B)
 ={}&\Cov(D_B)+\Cov(G_B,\eta_B)\\
    &+\Cov(\eta_B,G_B).
 \end{aligned}
 \label{eq:retrospective_covariance}
\end{equation}
The absolute trace of either cross term is at most
$\sqrt{V_B\tr\Cov(\eta_B)}\le\sqrt{V_B\epsilon_B^2}$, proving Equation~\ref{eq:retrospective_bounds}. If $\eta_B=0$, then $\E[D_B]=0$ and the cross terms vanish, giving
\begin{equation}
 \Cov(G)-\Cov(G_B)=\E[D_BD_B^\top]\succeq0.
\end{equation}
Conversely, the projection identity forces $\eta_B=0$, completing the equivalence in Proposition~\ref{prop:retrospective_projection}. The general bound controls total variance through its trace. When the residual vanishes, the matrix covariance ordering above also holds.
\end{proof}

The definition of $\mathcal K_B$ includes the selected boundaries and actual denominators, so $\eta_B$ accounts for the information they convey. Conditioning on a coarser record $\mathcal C\subseteq\mathcal K_B$ gives $\E[D_B\mid\mathcal C]=\E[\eta_B\mid\mathcal C]$. Jensen's inequality therefore yields
\begin{equation}
 \E\|\E[D_B\mid\mathcal C]\|^2\le\epsilon_B^2.
 \label{eq:coarse_residual}
\end{equation}
This lower bound measures the residual component visible through $\mathcal C$, while $\epsilon_B^2$ uses the full retained information. In particular, a positive coarse moment establishes a nonzero projection residual. Finite-model checks verify these identities and bounds across 18 examples, including retrospective selection, outcome filtering, and future-dependent failure denominators.

\subsection{Exact truncation tradeoff}

\begin{proposition}[Exact truncation tradeoff]
\label{prop:mse}
For square-integrable vectors $G,D$, let $G_w=G-D$, $\mu=\E[G]$, $d=\E[D]$, and $\Delta_V=\tr\Cov(G)-\tr\Cov(G_w)$. Averages of $n$ independent identically distributed copies satisfy Equation~\ref{eq:mse}. Strict MSE improvement relative to $\mu$ occurs exactly when
\begin{equation}
 \Delta_V>n\|d\|^2.
 \label{eq:variance_criterion}
\end{equation}
\end{proposition}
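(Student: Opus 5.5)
The plan is a direct bias--variance computation for sample means, followed by rearranging the resulting difference. Let $G^{(1)},\ldots,G^{(n)}$ be independent copies of $G$ with matching copies $D^{(j)}$, so that $G_w^{(j)}=G^{(j)}-D^{(j)}$. Write $\overline G=n^{-1}\sum_jG^{(j)}$ and $\overline G_w=n^{-1}\sum_jG_w^{(j)}$. Square integrability of $G$ and $D$ makes $G_w$ square integrable, so every second moment below is finite.

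\textbf{Step 1: MSE of each average.} For any square-integrable vector $Y$ with mean $\nu$ and any fixed vector $c$, expanding $Y-c=(Y-\nu)+(\nu-c)$ gives
\begin{equation}
 \E\|Y-c\|^2=\tr\Cov(Y)+\|\nu-c\|^2,
\end{equation}
because the cross term has zero mean. Independence gives $\Cov(\overline G)=n^{-1}\Cov(G)$, and likewise for $G_w$; only the trace is needed, and independence removes all cross-copy covariances.

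\textbf{Step 2: apply Step 1 with $c=\mu$.} Since $\E[\overline G]=\mu$,
\begin{equation}
 \E\|\overline G-\mu\|^2=\frac{\tr\Cov(G)}{n}.
\end{equation}
Since $\E[\overline G_w]=\mu-d$,
\begin{equation}
 \E\|\overline G_w-\mu\|^2=\frac{\tr\Cov(G_w)}{n}+\|d\|^2.
\end{equation}

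\textbf{Step 3: subtract.} The difference of the two MSEs is $\|d\|^2-n^{-1}\bigl(\tr\Cov(G)-\tr\Cov(G_w)\bigr)=\|d\|^2-\Delta_V/n$, which is Equation~\ref{eq:mse}.

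\textbf{Step 4: the strict criterion.} Strict improvement means this difference is negative. Multiplying by $n>0$ shows that this holds exactly when $\Delta_V>n\|d\|^2$, which is Equation~\ref{eq:variance_criterion}.

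There is no substantive obstacle. The only point needing care is that the bias is measured against the reference mean $\mu$, not against $\E[G_w]$. This is what places the full $\|d\|^2$ term in the masked MSE alone, and it is also why $d$ does not shrink with $n$ while the variance gap does.

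As a consistency check with Theorem~\ref{thm:group_stop}, for an ideal stop $d=0$ and $\Delta_V=\tr\E[D_gD_g^\top]=\E\|D_g\|^2$. This recovers the stated reduction $\E\|D_g\|^2/n$.
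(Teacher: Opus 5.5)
Your proposal is correct and follows the paper's proof essentially step for step: the bias--variance identity for each sample mean measured against $\mu$, subtraction to obtain Equation~\ref{eq:mse}, and multiplication by $n$ to get the strict criterion. The paper additionally records $\Delta_V=\tr\Cov(D)+2\E\langle G_w-\E G_w,D-d\rangle$ to interpret $\Delta_V$ through the removed contribution and its cross-correlation with the retained part, but the statement itself does not need this.
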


\begin{proof}
The expectation of $G_w$ is $\mu-d$. For an average of $n$ independent identically distributed gradient observations,
\begin{align}
 \E\|\overline G_w-\mu\|^2
 &=\|d\|^2+\frac{1}{n}\tr\Cov(G_w),\\
 \E\|\overline G-\mu\|^2
 &=\frac{1}{n}\tr\Cov(G).
\end{align}
Subtracting gives Equation~\ref{eq:mse}, with $\Delta_V=\tr\Cov(G)-\tr\Cov(G_w)$. To relate this to the removed update, define $V_D=\tr\Cov(D)$ and $C_{wD}=\E\langle G_w-\E G_w,D-d\rangle$. The identity $G=G_w+D$ gives
\begin{equation}
 \tr\Cov(G)=\tr\Cov(G_w)+\tr\Cov(D)
 +2\E\langle G_w-\E G_w,D-d\rangle.
\end{equation}
Thus $\Delta_V=V_D+2C_{wD}$, and strict improvement is equivalent to Equation~\ref{eq:variance_criterion}. The cross term records any noise cancellation between retained and removed contributions. Independence is used across the $n$ copies, while the retained and removed parts of each copy may be dependent.
\end{proof}

\paragraph{Applying the projection-residual bounds.}
For the retrospective estimator, set $G_w=G_B$ and $D=D_B$. Substituting $\|d\|^2\le\epsilon_B^2$ and the total-variance bound from Proposition~\ref{prop:retrospective_projection} gives
\begin{equation}
 \begin{aligned}
 &\E\|\overline G_B-\mu\|^2-\E\|\overline G-\mu\|^2\\
 &\qquad\le\epsilon_B^2-\frac{V_D}{n}
             +\frac{2\sqrt{V_B\epsilon_B^2}}{n}.
 \end{aligned}
\end{equation}
A negative right-hand side is sufficient for MSE improvement. This compares the residual magnitude with the removed variance and the batch size.

\paragraph{Connecting the decomposition to clipped PPO.}
For a fixed nonnegative weight $w$ and fixed advantage $A$, positive homogeneity gives
\begin{equation}
 \min\{\rho wA,\operatorname{clip}(\rho)wA\}
 =w\min\{\rho A,\operatorname{clip}(\rho)A\}.
\end{equation}
Consequently, at the same parameters and on the same data, detached masking with the original denominator decomposes the raw surrogate gradient into retained and removed contributions. Consistent subgradients are used at clipping boundaries, and common regularizers belong to the retained reference part. The decomposition therefore applies to paired surrogate gradients evaluated before an optimizer update.

\paragraph{A statistical design consequence.}
Under the ideal projection, $d=0$ and $C_{wD}=0$, so $\Delta_V=V_D\ge0$. For a fixed general rule with $\Delta_V>0$ and $\|d\|>0$, strict MSE improvement occurs when $n<\Delta_V/\|d\|^2$. Thus increasing the batch size reduces the variance benefit relative to the fixed expectation shift. These comparisons use population moments of the fixed estimators, with empirical estimates subject to sampling uncertainty.

\subsection{Approximate stopping and conditional bias}
\label{app:approx_stop}

The zero-credit condition gives an exact projection. To quantify a nearby case, we bound the conditional expectation of the removed suffix at an adapted stop using the remaining outcome uncertainty.

\begin{proposition}[Conditional bias of failure-only suffix removal]
\label{prop:approx_stop}
Under the finite-horizon score assumptions above, let $T\le H$ be a stopping time, $c_0$ a bounded $\F_T$-measurable failure coefficient, and $S_T^+=\sum_{k=T}^{H-1}Z_k$. For $D=(1-R)c_0S_T^+$,
\begin{equation}
 \begin{aligned}
 \E[D\mid\F_T]
 &=-c_0\E[(R-P_T)S_T^+\mid\F_T]\\
 &=-c_0\E\!\left[\sum_{k=T}^{H-1}Z_k\Delta P_k\,\middle|\,\F_T\right].
 \end{aligned}
 \label{eq:approx_stop_identity}
\end{equation}
The conditional norm obeys
\begin{equation}
 \|\E[D\mid\F_T]\|
 \le |c_0|\sqrt{P_T(1-P_T)\E[\|S_T^+\|^2\mid\F_T]}.
 \label{eq:approx_stop_bound}
\end{equation}
If $P_T\le\eta$, $|c_0|\le L$, and $\E[\|S_T^+\|^2\mid\F_T]\le v^2$ on the early-stop event, the conditional expectation shift there is at most $Lv\sqrt{\eta}$, and at $T=H$ it is zero.
\end{proposition}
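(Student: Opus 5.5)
Since $c_0$ is bounded and $\F_T$-measurable, it can be pulled out of the conditional expectation, so $\E[D\mid\F_T]=c_0\E[(1-R)S_T^+\mid\F_T]$. The plan is to reuse the optional-sampling identity already established for Proposition~\ref{prop:projection}, namely Equation~\ref{eq:conditional_removed_score}, which gives $\E[(1-R)S_T^+\mid\F_T]=-\chi_T$.

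\textbf{First equality.} Self-containedly, write $1-R=(1-P_T)-(R-P_T)$. The first piece is $\F_T$-measurable. Bounded optional sampling for the square-integrable martingale $S_t$ gives $\E[S_T^+\mid\F_T]=0$, so that piece contributes nothing. This leaves $-c_0\E[(R-P_T)S_T^+\mid\F_T]$.

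\textbf{Second equality.} This is the identity $\chi_T=\E[(R-P_T)(S_H-S_T)\mid\F_T]=\E[\sum_{k\ge T}Z_k\Delta P_k\mid\F_T]$ from Appendix~\ref{app:remaining_credit}. There it was proved for deterministic $t$ by score centering and the $\F_{k+1}$-measurability of $Z_k$, then extended to bounded stopping times by partitioning over $\{T=t\}$. Here I simply invoke it.

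\textbf{Norm bound.} I will apply conditional Cauchy--Schwarz to the first form. For any unit vector $e$,
\[
\bigl|e^\top\E[(R-P_T)S_T^+\mid\F_T]\bigr|\le\sqrt{\E[(R-P_T)^2\mid\F_T]\,\E[\|S_T^+\|^2\mid\F_T]}.
\]
Equivalently, one can apply Cauchy--Schwarz directly to the vector-valued conditional expectation via $\|\E[XY\mid\F]\|\le\sqrt{\E[X^2\mid\F]\E[\|Y\|^2\mid\F]}$ for scalar $X$. Then use $\E[(R-P_T)^2\mid\F_T]=P_T(1-P_T)$, which is the second identity of Proposition~\ref{prop:innovations} and follows from $R^2=R$. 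Multiplying by $|c_0|$ gives Equation~\ref{eq:approx_stop_bound}.

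\textbf{Final statement.} On the early-stop event, $P_T(1-P_T)\le P_T\le\eta$, $|c_0|\le L$ and $\E[\|S_T^+\|^2\mid\F_T]\le v^2$. These give the bound $Lv\sqrt\eta$. At $T=H$ the sum $S_T^+$ is empty, so $D=0$ there.

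\textbf{Main obstacle.} The step needing the most care is justifying the conditional identities at a random time $T$ rather than at a fixed $t$. These are optional sampling for $S$ and the conditional variance formula $P_T(1-P_T)$ at $T$. I would handle both by partitioning on $\{T=t\}\in\F_t$: on each such event, $\F_T$ coincides with $\F_t$ and the deterministic-time statements apply. This uses boundedness $T\le H$ and the square-integrability of the scores, which ensures the conditional second moments are finite and Cauchy--Schwarz is legitimate.
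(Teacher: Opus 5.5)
Your proof is correct. For the two identities and the final consequence it matches the paper's argument: centering $\E[S_T^+\mid\F_T]=0$ at each fixed $t$ and extending by partitioning over $\{T=t\}$, then invoking the remaining-credit identity of Appendix~\ref{app:remaining_credit}. The genuine difference is in the norm bound.

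\textbf{The paper's route.} The paper applies conditional Cauchy--Schwarz to the second, paired-increment form. It bounds $\|\E[\sum_{k\ge T}Z_k\Delta P_k\mid\F_T]\|^2$ by $\E[\sum_{k\ge T}\|Z_k\|^2\mid\F_T]\,\E[\sum_{k\ge T}(\Delta P_k)^2\mid\F_T]$. It then needs two orthogonality facts:
\begin{itemize}
\item conditional orthogonality of the score increments, to turn the first factor into $\E[\|S_T^+\|^2\mid\F_T]$;
\item Proposition~\ref{prop:innovations}, to turn the second factor into $P_T(1-P_T)$.
\end{itemize}

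\textbf{Your route.} You apply Cauchy--Schwarz to the first form $(R-P_T)S_T^+$. This needs only $\E[(R-P_T)^2\mid\F_T]=P_T-P_T^2$ and no orthogonality of either martingale's increments. It is shorter and more elementary, and it gives exactly the same bound.

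Strictly, that equality is the intermediate step in the proof of Proposition~\ref{prop:innovations}, not its stated sum-of-squared-increments identity. As you note, though, it follows at once from $R^2=R$ and $\E[R\mid\F_T]=P_T$.

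\textbf{What the paper's route buys.} Its advantage is interpretive. It writes the bound as the product of remaining score variation and remaining outcome-innovation variance, the paired-increment picture the paper uses to relate suffix removal to local action credit.
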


\begin{proof}
For each possible stopping value $t$, conditional score centering and the tower property imply $\E[\sum_{k=t}^{H-1}Z_k\mid\F_t]=0$. Splitting over $\{T=t\}$ yields $\E[S_T^+\mid\F_T]=0$. Hence
\begin{align}
 \E[D\mid\F_T]
 &=c_0\bigl(\E[S_T^+\mid\F_T]-\E[RS_T^+\mid\F_T]\bigr)\\
 &=-c_0\E[(R-P_T)S_T^+\mid\F_T].
\end{align}
The remaining-credit identity established above gives the second equality in Equation~\ref{eq:approx_stop_identity}. Applying conditional Cauchy--Schwarz to the sum of paired increments bounds the squared norm by
\begin{equation}
 c_0^2
 \E\!\left[\sum_{k=T}^{H-1}\|Z_k\|^2\,\middle|\,\F_T\right]
 \E\!\left[\sum_{k=T}^{H-1}(\Delta P_k)^2\,\middle|\,\F_T\right].
\end{equation}
Conditional orthogonality of the score increments identifies the first expectation as $\E[\|S_T^+\|^2\mid\F_T]$. Proposition~\ref{prop:innovations} identifies the second as $P_T(1-P_T)$, yielding Equation~\ref{eq:approx_stop_bound}. The stated bounds and $P_T(1-P_T)\le P_T$ give the final inequality. The two factors quantify the remaining score variation and outcome uncertainty.
\end{proof}

Since $G_w=G-D$, the unconditional shift is
\begin{equation}
 \E[G_w]-\E[G]
 =\E\!\left[c_0\sum_{k=T}^{H-1}Z_k\Delta P_k\right].
 \label{eq:mean_shift_innovations}
\end{equation}
The coefficient $c_0$ supplies the failure weight and its sign. For a constant baseline $b$, setting $c_0=-b$ multiplies the expected suffix credit by $-b$. The factor $P_T(1-P_T)$ measures residual outcome uncertainty, while the paired increments determine its contribution to the gradient shift.

The stopping time is chosen from the observed prefix, and $(1-R)$ selects removal after the outcome is known. Equation~\ref{eq:approx_stop_identity} explicitly accounts for this failure selection under the on-policy score law. Its use requires an adapted stop and a failure coefficient measurable at that stop.

\subsection{Comparing retrospective and prefix boundaries}
\label{app:hindsight}

A rule based on the observed prefix provides a reference for examining how complete-video selection changes the removed contribution. We compare the two rules under the same fixed policy and group law, using the same GRPO coefficients and original denominators. This comparison characterizes the combined change in available information and boundary-selection procedure.

\paragraph{Prefix comparison rule.}
Evaluate the frozen event model on successive observed prefixes. At each nonterminal time $t$, an error qualifies according to the event-mapping conditions in Section~\ref{sec:method}, with the recovery field omitted. Let $T'$ be the first qualifying time, with fallback $H$, and set $B'=T'-1$. Here the boundary is fixed at the detection time, so it is determined by the available prefix. This choice makes $T'$ a stopping time when decoding randomness is fixed in advance or included in the local filtration. By comparison, the complete-video rule selects the right endpoint of the last unrecovered error.

For each failed trajectory $i$, write $B_i$ and $B_i'$ for the two boundaries. Define the group-level disagreement event $\mathcal E_{\ne}=\{\exists i:R_i=0,\ B_i\ne B_i'\}$. Both rules retain all successful-rollout contributions. Hence, with $D_{B'}=G-G_{B'}$, their removed contributions agree outside $\mathcal E_{\ne}$, and
\begin{equation}
 \delta^2=\E\|D_B-D_{B'}\|^2
 =\E\!\left[\ind_{\mathcal E_{\ne}}\|D_B-D_{B'}\|^2\right].
 \label{eq:hindsight_discrepancy}
\end{equation}
If $\Pr(\mathcal E_{\ne})>0$, this equals $\Pr(\mathcal E_{\ne})\E[\|D_B-D_{B'}\|^2\mid\mathcal E_{\ne}]$. If the event has probability zero, then $\delta=0$.

\begin{proposition}[Hindsight decomposition]
\label{prop:hindsight}
For square-integrable estimators, write $V_D=\tr\Cov(D_B)$, $V_{B'}=\tr\Cov(G_{B'})$, and $V_{D'}=\tr\Cov(D_{B'})$. With $\Delta_V=\tr\Cov(G)-\tr\Cov(G_B)$, the two rules satisfy
\begin{equation}
 \begin{aligned}
 \|\E[D_B]\|&\le\|\E[D_{B'}]\|+\delta,\\
 \Delta_V&\ge V_D-2\bigl|\tr\Cov(G_{B'},D_{B'})\bigr|
          -2\delta\bigl(\sqrt{V_{B'}}+\sqrt{V_{D'}}\bigr)-2\delta^2.
 \end{aligned}
 \label{eq:hindsight_bounds}
\end{equation}
\end{proposition}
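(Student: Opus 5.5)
Write $E=D_B-D_{B'}$, so that $\E\|E\|^2=\delta^2$, and treat the retrospective removed contribution as a perturbation of the prefix one: $D_B=D_{B'}+E$. Both rules leave $G$ unchanged, so $G_B=G-D_B=G_{B'}-E$. All expectations use the same fixed reference law, possibly the accepted-group law.

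For the first inequality, take expectations: $\E[D_B]=\E[D_{B'}]+\E[E]$. The triangle inequality and Jensen's inequality, $\|\E[E]\|\le(\E\|E\|^2)^{1/2}=\delta$, then give $\|\E[D_B]\|\le\|\E[D_{B'}]\|+\delta$ directly.

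For the variance bound, I would start from the expansion used in the proof of Proposition~\ref{prop:mse}, with $G=G_B+D_B$:
\[
\Delta_V=\tr\Cov(G)-\tr\Cov(G_B)=V_D+2\tr\Cov(G_B,D_B).
\]
It then suffices to show $\tr\Cov(G_B,D_B)\ge-\bigl|\tr\Cov(G_{B'},D_{B'})\bigr|-\delta(\sqrt{V_{B'}}+\sqrt{V_{D'}})-\delta^2$. Substituting $G_B=G_{B'}-E$ and $D_B=D_{B'}+E$ and using bilinearity gives
\[
\tr\Cov(G_B,D_B)=\tr\Cov(G_{B'},D_{B'})+\tr\Cov(G_{B'},E)-\tr\Cov(E,D_{B'})-\tr\Cov(E).
\]
Each term is bounded separately:
\begin{itemize}
\item the first term is at least $-\bigl|\tr\Cov(G_{B'},D_{B'})\bigr|$;
\item by Cauchy--Schwarz for centered vectors, $|\tr\Cov(G_{B'},E)|\le\sqrt{V_{B'}\tr\Cov(E)}\le\delta\sqrt{V_{B'}}$, since $\tr\Cov(E)\le\E\|E\|^2=\delta^2$;
\item likewise $|\tr\Cov(E,D_{B'})|\le\delta\sqrt{V_{D'}}$;
\item finally $\tr\Cov(E)\le\delta^2$.
\end{itemize}
Multiplying by $2$ and adding $V_D$ yields Equation~\ref{eq:hindsight_bounds}.

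The only point needing care is the sign bookkeeping. The $-\tr\Cov(E)$ term must be bounded by $\delta^2$ without an absolute value, and it enters with factor $2$, which explains the $-2\delta^2$. Square integrability of $G$, $G_B$ and $G_{B'}$ makes every covariance finite, so no further regularity is needed. The argument also never uses the disagreement event $\mathcal E_{\ne}$ beyond the identity for $\delta^2$ in Equation~\ref{eq:hindsight_discrepancy}.
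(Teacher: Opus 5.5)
Your proposal is correct and follows the paper's proof essentially step for step. Both use the perturbation $D_B=D_{B'}+E$, $G_B=G_{B'}-E$ and Jensen's inequality for the expectation bound. For the variance bound, both expand $\Cov(G_B,D_B)$ into the same four terms and control them with trace Cauchy--Schwarz and $\tr\Cov(E)\le\delta^2$.
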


\begin{proof}
Write $D'=D_{B'}$ and $\Delta_D=D_B-D'$, so that $D_B=D'+\Delta_D$ and $G_B=G_{B'}-\Delta_D$. The expectation bound follows from $\E[D_B]=\E[D']+\E[\Delta_D]$ and $\|\E[\Delta_D]\|\le\sqrt{\E\|\Delta_D\|^2}=\delta$. For the variance, $\Delta_V=\tr\Cov(D_B)+2\tr\Cov(G_B,D_B)$ and
\begin{equation}
 \Cov(G_B,D_B)=\Cov(G_{B'},D')+\Cov(G_{B'},\Delta_D)-\Cov(\Delta_D,D')-\Cov(\Delta_D).
 \label{eq:hindsight_cross}
\end{equation}
Since $\tr\Cov(\Delta_D)\le\E\|\Delta_D\|^2=\delta^2$, the Cauchy--Schwarz inequality for traces bounds the last three terms by $\delta\sqrt{V_{B'}}$, $\delta\sqrt{V_{D'}}$, and $\delta^2$, which proves Equation~\ref{eq:hindsight_bounds}. All moments use the fixed reference group law, so the same algebra applies when this is an accepted-group law.
\end{proof}

The first bound separates the prefix rule's expectation shift from the additional discrepancy $\delta$. For a local trajectory under the on-policy law, Proposition~\ref{prop:approx_stop} bounds the prefix shift when its failure coefficient is measurable at $T'$. At group level, Equation~\ref{eq:hindsight_bounds} retains $\E[D_{B'}]$ under the chosen reference law, accounting for the actual coefficients and any group selection.

\paragraph{Diagnostic quantities.}
These terms specify a possible paired evaluation on saved rollout groups. Querying the frozen model on successive prefixes gives each $B_i'$, which can be compared with the complete-video boundary $B_i$ on failed trajectories. The fraction of groups with a differing failure boundary estimates $\Pr(\mathcal E_{\ne})$, and the distribution of $B_i-B_i'$ describes timing differences. Paired gradients on disagreeing groups estimate the conditional moment in $\delta^2$, while gradients under the prefix rule estimate $\tr\Cov(G_{B'},D_{B'})$. Finite models with known disagreement events verify the decomposition, including the zero-disagreement case.

\subsection{Partial suffix removal}
\label{app:shrinkage}

The same MSE decomposition also permits a theoretical comparison of removal strengths at a fixed boundary. Let $D$ be the removed suffix contribution and $w_{\rm prefix}$ the corresponding binary prefix mask on failures. A scalar $\lambda\in[0,1]$ defines $G_\lambda=G-\lambda D$, with temporal weights on failed trajectories given by
\begin{equation}
 w_\lambda=1-\lambda(1-w_{\rm prefix}).
\end{equation}
The prefix keeps weight one and the suffix gets $1-\lambda$. Successful-rollout loss terms and the original denominator are preserved. This family varies the strength of suffix removal, whereas the uniform-scaling ablation in Section~\ref{sec:experiments} scales the entire failed trajectory.

Set $\mu=\E[G]$, $d=\E[D]$, $V_D=\tr\Cov(D)$, and $C_{GD}=\E\langle G-\mu,D-d\rangle$. The same variance expansion gives
\begin{equation}
 \begin{aligned}
 \E\|\overline G_\lambda-\mu\|^2-\E\|\overline G-\mu\|^2
 &=\lambda^2\|d\|^2+\frac{\lambda^2V_D-2\lambda C_{GD}}{n},\\
 \lambda^*&=\operatorname{clip}_{[0,1]}
 \frac{C_{GD}}{V_D+n\|d\|^2}.
 \end{aligned}
 \label{eq:partial_suffix_risk}
\end{equation}
The second line minimizes the quadratic in the first when its denominator is positive. If the denominator is zero, then $D=0$ almost surely and all strengths are equivalent. For a fixed joint distribution of $G$ and $D$, increasing $n$ makes the optimal removal strength $\lambda^*$ nonincreasing.

Under the exact projection, $D$ is centered and uncorrelated with $G-D$, giving $d=0$ and $C_{GD}=V_D$. If $V_D>0$, Equation~\ref{eq:partial_suffix_risk} gives $\lambda^*=1$. Thus complete suffix removal minimizes MSE within this continuous family under the exact projection.

\section{Examples Clarifying the Projection Conditions}
\label{app:counterexample}

The following examples separate remaining conditional credit from success probability, boundary selection, and cancellation across scenes. Each episode has two chunks. Chunk 0 is parameter independent, and chunk 1 samples $a\sim\mathrm{Bernoulli}(p)$ with logit parameter $\theta$, giving scores $Z_0=0$ and $Z_1=a-p$. For a fixed baseline $b\in[0,1]$, the reference estimator is $G=(R-b)Z_1$. Successes retain both chunks. On failure, $T=1$ removes $Z_1$, while $T=2$ retains it, with boundary $B=T-1$. Bars denote averages of $n$ independent episodes, and MSE is measured relative to $\mu=\E[G]$.

\paragraph{Rare success and expectation shift.}
Set $R=a$ and choose the fixed candidate $T=1$. The discarded action determines success, so $\chi_1=p(1-p)>0$ for $0<p<1$. The full and masked estimators satisfy
\begin{equation}
 G=(R-b)(a-p),\qquad \E[G]=p(1-p),
 \label{eq:rare_success_example}
\end{equation}
\begin{equation}
 G_w=\ind\{a=1\}(1-b)(1-p),\qquad
 \E[G_w]=(1-b)p(1-p).
\end{equation}
The expectation shifts by $-bp(1-p)$, a fraction $b$ of the reference expectation. Thus, as $p$ approaches zero, the absolute shift vanishes while its relative size remains fixed. The two possible values of $G$ are $bp$ and $(1-b)(1-p)$, which give
\begin{equation}
 \begin{aligned}
 \Var(G)&=p(1-p)(1-b-p)^2,\\
 \Var(G_w)&=p(1-p)^3(1-b)^2.
 \end{aligned}
\end{equation}
At $b=1/2$ and $p=0.01$, the expectations are $0.0099$ and $0.00495$, and the variances are $0.00237699$ and $0.0024257475$. Both squared bias and variance increase, so masking raises MSE for every $n$. For $n=64$, MSE rises from $0.00003714046875$ to $0.0000624048046875$.

The same construction makes the source of the variance increase explicit at $p=b=1/2$. Here $G=1/4$ on both outcomes, while $G_w=1/4$ on success and zero on failure. Therefore
\begin{equation}
 \begin{aligned}
 \E[G_w]&=\frac18,\\
 \Var(G_w)&=\frac1{64},\\
 \E\|\overline G_w-\mu\|^2&=\frac1{64}+\frac1{64n}.
 \end{aligned}
\end{equation}
Writing $D=G-G_w$ gives $\Var(D)=1/64$ and $\Cov(G_w,D)=-1/64$. The full estimator's zero variance comes from cancellation between retained and removed contributions.

The approximate-stopping bias bound is also attained in this construction. The failure coefficient is $-b$, the remaining score is $a-p$, and $P_1=p$. Its squared bound is therefore $b^2p^2(1-p)^2$, exactly the squared expectation shift. This equality illustrates why a small absolute bias bound can coexist with a fixed relative change.

\paragraph{Zero credit with uncertain outcomes.}
Set $p=b=1/2$, let an independent fair coin determine $R$, and choose $T=1$. Success is independent of the action, so $P_1=1/2$ and $\chi_1=\E[RZ_1\mid\F_1]=0$. The full estimator has expectation zero and variance $1/16$. Masking failures gives $G_w=R(a-1/2)/2$, with expectation zero and variance $1/32$. On failure, the retained information contains the outcome but leaves the action uniform, making the conditional expectation of $G$ zero. On success, the complete record reveals $G$. Thus $G_w$ is the conditional projection even though success remains uncertain at $T=1$.

\paragraph{Future-dependent boundary selection.}
Keep $p=b=1/2$ and the independent outcome coin, but choose $T=1$ when $a=1$ and $T=2$ otherwise. Both deterministic candidates have zero remaining credit. However, $\{T=1\}=\{a=1\}$ depends on an action unobserved at time 1, so $T$ is not a stopping time. Its failure-only mask removes exactly the event $(a,R)=(1,0)$, which has probability $1/4$ and full contribution $-1/4$. The expectation therefore changes from zero to $1/16$. Here the shift arises from selecting on the future action, despite zero credit at each fixed candidate time.

\paragraph{Cancellation across initial scenes.}
Let an observed initial scene $X\in\{0,1\}$ be uniform, with $p=b=1/2$ and $T=1$. Set $R=a$ for $X=0$ and $R=1-a$ for $X=1$. The two conditional credits are $1/4$ and $-1/4$. Within each scene, $G$ is constant at the corresponding value, while $G_w$ retains it only on success. Averaging across scenes gives expectation zero for both estimators and variances $1/16$ and $1/32$, respectively. Yet on failure, the retained scene identifies $G=\pm1/4$, whereas $G_w=0$. The equal global expectations therefore result from cancellation across scenes rather than conditional projection.

\paragraph{Finite enumeration check.}
Exact rational enumeration verifies the expectations, variances, and MSEs in these examples. Further checks cover local and group projections, nested stops, variable denominators, outcome filtering, approximate bias, and partial suffix removal. These finite calculations complement the analytic proofs.

\section{Policy Training and Evaluation}
\label{app:config}

Experiments are conducted in RoboTwin 2.0~\citep{chen2025robotwin}, a SAPIEN-based simulation benchmark for bimanual manipulation. The seven tasks cover tool use, handover, lifting, placement, and stacking in randomized scenes. A rollout ends when the task-specific success condition is satisfied or its execution budget is exhausted. The simulator assigns a binary terminal reward $R\in\{0,1\}$ from this outcome.

\subsection{Training and evaluation settings}
\label{app:grpo8}

The policy uses the action-token OpenVLA-OFT implementation from SimpleVLA-RL~\citep{kim2025oft,li2025simplevla}. For each task, the RL methods start from the same SFT checkpoint. Table~\ref{tab:policy_training_config} summarizes the rollout collection, optimization, and evaluation settings. One policy update collects 64 accepted groups (512 trajectories) and makes one PPO pass in minibatches of 128, giving four optimizer steps.

\input{tables/policy_training_config}

Table~\ref{tab:grpo8_baseline} lists task-specific episode budgets and Full GRPO success at every scheduled evaluation from update 0 to 40. Its initial and update-40 evaluations supply the SFT and Full GRPO columns in Table~\ref{tab:eight_task_policy}.

\input{tables/grpo_baseline}

\subsection{Feedback accounting and normalization}
\label{app:feedback_normalization}

The collection policy applies temperature 1.6 on the action-token support, and token likelihoods use the same distribution. In Equation~\ref{eq:group_advantage}, $s_g$ is the sample standard deviation of group rewards and $\epsilon_{\mathrm{num}}=10^{-6}$. Each chunk score $Z_{i,k}$ sums its constituent token scores. At the collection parameters, the unclipped full-feedback objective therefore contributes $A_i Z_{i,k}/N_i$ per chunk, matching the reference estimator's weighting.

The loss averages action-token terms within each trajectory before gradient accumulation, using the original valid-token count $N_i$ as its denominator. This implements the normalization in Equation~\ref{eq:ppo}. To quantify the feedback retained by a mask $w$, define its pre-clipping negative coefficient mass as
\begin{equation}
 D_-(w)=\frac{1}{M}\sum_{i:R_i=0}\frac{|A_i|}{N_i}
 \sum_{u\in\mathcal V_i}w_{i,k(u)}.
 \label{eq:mass}
\end{equation}
This quantity combines the retained token weights with each failed rollout's advantage and original denominator. The ratio $D_-(w)/D_-(1)$ measures the retained fraction of feedback weight used in the ablation analysis. For example, retaining three of eight equally populated chunks leaves $3/8$ of a failed trajectory's original coefficient mass. Across trajectories, the aggregate fraction also depends on their advantage weights.

Theorem~\ref{thm:group_stop} additionally requires $N_i$ to be measurable from the retained information. In the implementation, the valid-token mask is determined by the executed control steps, and $N_i$ is its sum. Feedback truncation sets suffix advantages to zero while retaining this mask and its denominator. A fully valid eight-chunk Hammer failure has $N_i=8\times25\times14=2800$. Earlier termination reduces the valid-token count before feedback truncation is applied.

\input{figures/vlm_example_style}

\section{VLM Event Model: Mapping, Corpus, and Localization}
\label{app:labels}
\label{app:vlm}

\subsection{VLM inputs, prompting, and boundary mapping}
\label{app:event_rule}

\paragraph{Input construction.}
Each request contains observations from one completed rollout. Table~\ref{tab:vlm_inputs} lists their sources and representation. The three camera streams are supplied as separate video inputs, while task context, robot observations, and temporal alignment enter the user message as text. Terminal rewards are used by the training loop to select failed rollouts for inference.

\input{tables/vlm_inputs}

Each full action chunk contains 25 control steps. Frame 0 precedes chunk 0, and a frame with within-chunk fraction 1 records the end of that chunk. These indices locate events on the policy timeline.

\paragraph{Prompt structure.}
A shared template combines the task instruction and a fixed task-specific clarification with the executed chunk count, initial action budget, and synchronized observations. The clarification specifies the intended manipulation, such as stacking the bowls and releasing the moved bowl. The system instruction and an abridged version of the user instruction are shown below.

\begin{center}
\vlmexamplecard{vlmprompt}{VLM event-localization prompt}{%
\textbf{System instruction.} Review robot video evidence. Return the requested concise JSON.

\medskip
\textbf{User instruction (abridged).} Review the synchronized camera views and robot observations for the given task. Track objects across views and check whether apparent errors are followed by regrasping or task completion. Identify the last visible error sequence without subsequent recovery and localize its onset using the recorded frame-to-chunk mapping. Return a chunk interval when timing is ambiguous and a null interval when no specific error can be localized.
}{%
\textbf{Required response.} Return a compact JSON response containing the observed outcome, grasp and handover status, error type, error interval, recovery status, final holder, up to six supporting frame indices, and a brief factual summary.
}
\end{center}

Frozen-model inference uses greedy decoding with at most 512 generated tokens. The resulting event record supplies the outcome, interval, and recovery information used by the boundary rule.

\paragraph{Boundary mapping and fallbacks.}
The mapping in Equations~\ref{eq:events} and~\ref{eq:event_boundary} is applied in the following order.

\begin{enumerate}
 \setlength{\itemsep}{2pt}
 \setlength{\parskip}{0pt}
 \item \textbf{Select rollouts.} Terminal environment rewards determine which rollouts require inference. Failed rollouts are passed to the event model, while successful rollouts bypass it and retain all their loss terms.
 \item \textbf{Validate responses.} The mapper checks the response schema and any non-null interval. A valid interval contains two integer chunk indices satisfying $0\le\ell_i\le u_i<K_i$. Malformed responses, identity mismatches, and failed inference requests raise implementation errors rather than becoming no-event predictions.
 \item \textbf{Apply semantic fallbacks.} Full feedback is retained for a predicted success or uncertain outcome, no clear error, an uncertain error type, or a missing localization interval. It is also retained when subsequent recovery from the localized error is reported or uncertain.
 \item \textbf{Set the boundary.} For a valid failure prediction with a localized error and no observed recovery, the mapper returns the upper interval endpoint as $B_i=u_i$.
\end{enumerate}

Supporting frame identifiers provide evidence for inspection. The validated event fields determine the feedback boundary.

\subsection{Annotated video corpus}

The annotated corpus contains 1,151 rollout videos, comprising 1,024 environment-failed trajectories and 127 successful references. Table~\ref{tab:vlm_corpus_split} summarizes the training, validation, and test partitions.

\input{tables/vlm_corpus_split}

Task--scene identity determines the split even when the same scene appears at several policy checkpoints. The full collection has 1,045 unique task--scene identities with zero overlap across partitions. The corpus also excludes the held-out scene seeds used for policy evaluation. Gradient fitting uses the training partition, and checkpoint ranking uses validation metrics.

Each video has structured event annotations recording observed outcome, grasp establishment, handover completion, error type, error localization interval, observed recovery, final holder, evidence frames, and a summary. The error interval expresses uncertainty about the onset of the last visible error without observed recovery. A null interval is allowed when no error can be localized, and unknown fields are preserved. Figure~\ref{fig:vlm_dataset_example} pairs a video excerpt with its annotation. In this example, the initial pickup difficulty is resolved, and the interval identifies the later placement error in chunk 12.

\input{figures/vlm_dataset_example}

Environment outcomes are stored separately from versioned visible-event labels. The collection's sampling design and recording length can reveal outcome information. The complete corpus manifest contains 1,024 predicted visible failures, 71 visible successes, and 56 uncertain outcomes, with 1,011 non-null intervals and 140 null intervals. These label counts cover all 1,151 videos.

The event model trained on this corpus is documented in the next subsection, and the policy runs that use it in Appendix~\ref{app:epoch8_policy}.

\subsection{Event model training}
\label{app:vlm8_training}

\paragraph{Adapter configuration.}
One adapter is shared across all annotated tasks. Table~\ref{tab:vlm_training_config} summarizes its LoRA and optimization settings, which remain fixed throughout training. The response loss follows Equation~\ref{eq:student}, with the data partitions in Table~\ref{tab:vlm_corpus_split}. Training and localization evaluation use the chat template with thinking disabled.

\input{tables/vlm_training_config}

\paragraph{Checkpoint selection.}
Checkpoints are ranked by validation task-macro IoU, with lower response NLL and then earlier update as tie-breakers. Both the highest validation IoU ($78.2\%$) and lowest NLL occur at update 304 (epoch 8), which is retained. A fresh patience counter starts at update 361, with validation every 19 updates (half an epoch). Four checks without an IoU improvement of at least $0.005$ over the last reset anchor trigger stopping at update 437 (11.5 epochs). This patience rule controls training duration separately from checkpoint ranking.

\paragraph{Localization evaluation.}
Test diagnostics also informed training duration. Table~\ref{tab:vlm8_localization} reports agreement with versioned reference intervals on 220 localizable, unrecovered test failures from the seven tasks in the policy comparison. Eligibility is determined from reference annotations independently of model predictions. The reported checkpoint is selected by validation IoU.

\input{tables/event_localization}

\paragraph{Qualitative localization.}
Figure~\ref{fig:vlm_localization_example} shows a failed Handover in which the selected model localizes the transfer error to chunks $[4,5]$, matching the reference annotation. Head-camera frames place this interval between the initial grasp and later motion without recovery.

\input{figures/vlm_localization_example}

\paragraph{Localization across checkpoints.}
Figure~\ref{fig:vlm_checkpoint_intervals} follows the prediction for the same rollout across checkpoints. The base model predicts $[6,8]$, and epoch 3 predicts $[5,5]$. At epochs 6 and 8, both endpoints match the reference interval $[4,5]$.

\input{figures/vlm_checkpoint_intervals}

\section{Policy Results and Feedback Ablations}
\label{app:diagnostics}
\label{app:curves}
\label{app:ablations}
\label{app:cases}

This appendix provides scheduled policy evaluations, feedback statistics, and ablation details.

\subsection{Seven-task policy results}
\label{app:epoch8_policy}

The frozen event model is the validation-selected epoch-8 adapter (update 304) of Appendix~\ref{app:vlm8_training}. Table~\ref{tab:eight_task_policy} reports the seven-task policy comparisons at update 40 and their best scheduled scores.

\paragraph{Training configuration.}
All methods follow the training and evaluation settings of Appendix~\ref{app:grpo8}. Microbatches for actor updates and log-probability computation contain one trajectory per rank, and gradient accumulation preserves the global PPO minibatch of 128 trajectories.

The fixed-endpoint results use the scheduled evaluation after 40 updates, with 128 seen-seed and 128 held-out-seed episodes and greedy decoding. The Best columns additionally report each run's maximum pooled success over the same eight evaluations at updates $5,10,\ldots,40$, and their means average the per-task maxima. The SFT column uses the Full-GRPO run's initial evaluation. Both methods share the task checkpoint and evaluation seed splits. Separate simulator executions retain some variability under greedy action decoding, so their initial success rates can differ. Table~\ref{tab:epoch8_trajectories} provides all scheduled scores across the seven tasks. The update budget fixes the number of accepted training groups. Adaptive refill can require different numbers of generated trajectories to reach this budget.

\input{tables/policy_learning_curves}

\paragraph{Feedback usage.}
\label{app:feedback_accounting}
Table~\ref{tab:feedback_usage} summarizes the feedback retained by the event masks.

\input{tables/feedback_usage}

The statistics pool event records and loss-mask coefficients over all 40 updates of each MART run, using the same frozen adapter and preserving success weights and valid-token denominators. Retained coefficient mass in Table~\ref{tab:feedback_usage} is the ratio of summed post-mask to summed pre-mask mass. A localized event at the final chunk retains full feedback without a semantic fallback.

Success rates are computed from integer episode counts, with pooled success calculated over all 256 episodes across both seed splits.

\subsection{Feedback ablations}
\label{app:two_task_ablation}

The ablations cover Hammer and Handover, selected from the seven-task policy comparisons. For each task, statistics are pooled over all 40 MART updates and frozen before ablation training. Uniform scaling divides the retained negative coefficient mass summed over these updates by the corresponding sum of original mass, using Equation~\ref{eq:mass}. This gives $\gamma=0.4920678764$ for Hammer and $0.3268904598$ for Handover. Random prefix draws independently from the empirical boundary histogram of accepted failures, including full-feedback fallbacks. The ablation RNG is separate from policy sampling. Each sampled boundary is a zero-based chunk index, and its prefix is intersected with the new rollout's valid-token range. Both ablations preserve success weights, original token denominators, and terminal rewards, and run without VLM inference.

All ablations use the same task SFT checkpoints, 64 accepted groups of eight rollouts per update, global PPO minibatch 128, and the update-40 evaluation endpoint with 128 seen-seed and 128 held-out-seed episodes. Both ablations reuse frozen statistics, with uniform scaling matching the aggregate coefficient ratio and random prefix matching the boundary distribution. Their realized coefficient-mass ratios are measured on each run's own rollout groups.

At update 40, uniform scaling scores $100/128$ on seen-seed and $99/128$ on held-out-seed scenes for Hammer ($199/256$ combined), and $50/128$ and $54/128$ for Handover ($104/256$ combined). Random prefix scores $96/128$ and $88/128$ for Hammer ($184/256$ combined), and $46/128$ and $60/128$ for Handover ($106/256$ combined), in the same split order.

Figure~\ref{fig:feedback_ablations} compares the endpoints alongside each run's observed retained negative coefficient mass, pooled over all 40 updates using Equation~\ref{eq:mass}. Under these similar feedback-weight budgets, event-guided truncation achieves higher success than both ablations on both tasks.

\input{figures/feedback_ablations}

%% file: tables/policy_training_config.tex
\begin{table}[!ht]
 \caption{\textbf{Policy training and evaluation settings.} Configuration for the reported policy comparisons and feedback ablations.}
 \label{tab:policy_training_config}
 \centering\small
 \renewcommand{\arraystretch}{1.08}
 \begin{tabularx}{\linewidth}{@{}p{0.36\linewidth}X@{}}
 \toprule
 Parameter & Setting\\
 \midrule
 \multicolumn{2}{@{}l}{\textit{Policy and rollout collection}}\\
 Server capacity & 8 NVIDIA H100 GPUs\\
 Initialization & Task-specific SFT checkpoint\\
 Actor observations & Head-camera view and proprioception\\
 Environment & Randomized RoboTwin 2.0 scenes, seen instructions\\
 Training initial conditions & 1,000 per task\\
 Action chunk & 25 control steps, 14 action dimensions per step\\
 Accepted rollout batch & 64 groups of 8 trajectories per update\\
 Group filtering & Success fraction in $[0.1,0.9]$, with adaptive refill\\
 Sampling temperature & 1.6\\
 \midrule
 \multicolumn{2}{@{}l}{\textit{Policy optimization}}\\
 Optimizer & AdamW, $\beta_1=0.9$, $\beta_2=0.999$\\
 Learning rate & $5\times10^{-6}$\\
 Weight decay & $0.01$\\
 Actor LoRA & Disabled\\
 Gradient clipping & Global norm capped at 1\\
 PPO passes per rollout batch & 1\\
 Global PPO minibatch & 128 trajectories\\
 Microbatch per GPU & 1 trajectory\\
 PPO clipping & $\varepsilon_-=0.2$, $\varepsilon_+=0.28$\\
 KL / entropy coefficients & 0 / 0\\
 Loss normalization & Original valid-token count of each trajectory\\
 Training budget & 40 policy updates\\
 \midrule
 \multicolumn{2}{@{}l}{\textit{Policy evaluation}}\\
 Decoding & Greedy\\
 Schedule & Updates $0,5,\ldots,40$\\
 Episodes per evaluation & 128 seen-seed and 128 held-out-seed\\
 Scene splits & Training-seed and held-out-seed scenes, with the same task configuration\\
 \bottomrule
 \end{tabularx}
\end{table}

%% file: tables/grpo_baseline.tex
\begin{table}[!ht]
 \caption{\textbf{Task budgets and Full GRPO learning progress.} Success (\%) pooled over 128 seen-seed and 128 held-out-seed episodes at each update. Update 0 is the SFT evaluation. Episode budgets are maxima, with 25 action steps per chunk.}
 \label{tab:grpo8_baseline}
 \centering\small
 \setlength{\tabcolsep}{1.8pt}
 \renewcommand{\arraystretch}{1.08}
 \begin{tabular*}{\linewidth}{@{\extracolsep{\fill}}lrrrrrrrrrrr@{}}
 \toprule
 Task & \multicolumn{2}{c}{Episode budget} & \multicolumn{9}{c}{Full GRPO success (\%) at update}\\
 \cmidrule(lr){2-3}\cmidrule(l){4-12}
 & \shortstack{Max. action\\steps} & \shortstack{Max.\\chunks} & 0 & 5 & 10 & 15 & 20 & 25 & 30 & 35 & 40\\
 \midrule
 Hammer & 200 & 8 & 30.5 & 35.5 & 48.0 & 46.9 & 53.9 & 63.7 & 65.2 & 74.6 & 74.2\\
 Handover & 400 & 16 & 21.1 & 24.6 & 32.0 & 30.5 & 31.2 & 34.4 & 35.5 & 33.6 & 34.8\\
 Lift pot & 200 & 8 & 5.5 & 20.7 & 29.3 & 35.9 & 44.5 & 46.1 & 52.0 & 54.3 & 58.2\\
 Dual bottles & 100 & 4 & 20.7 & 21.5 & 32.0 & 30.1 & 35.9 & 44.1 & 43.0 & 41.4 & 44.9\\
 Container on Plate & 150 & 6 & 87.1 & 91.4 & 91.4 & 93.8 & 92.6 & 93.0 & 93.8 & 94.9 & 93.8\\
 Empty cup & 200 & 8 & 80.9 & 82.8 & 86.7 & 83.6 & 85.9 & 90.6 & 89.8 & 89.5 & 89.1\\
 Stack bowls & 400 & 16 & 39.1 & 46.1 & 47.7 & 56.2 & 57.0 & 56.2 & 60.2 & 63.3 & 60.2\\
 \bottomrule
 \end{tabular*}
\end{table}

%% file: figures/vlm_example_style.tex
\definecolor{vlmprompt}{RGB}{38,77,112}
\definecolor{vlmannotation}{RGB}{35,112,87}
\definecolor{vlmerror}{RGB}{181,83,67}
\definecolor{vlmmuted}{RGB}{86,101,116}
\definecolor{vlmframe}{RGB}{190,200,209}

\newcommand{\vlmexamplecard}[4]{%
 \begingroup
 \setlength{\fboxsep}{0pt}%
 \setlength{\fboxrule}{0.5pt}%
 \noindent\fcolorbox{#1}{#1!4}{%
  \begin{minipage}{\dimexpr\linewidth-2\fboxrule\relax}
   \setlength{\fboxsep}{6pt}%
   \noindent\colorbox{#1}{%
    \parbox{\dimexpr\linewidth-2\fboxsep\relax}{\small\color{white}\bfseries #2}}%
   \par\nointerlineskip
   \setlength{\fboxsep}{8pt}%
   \noindent\colorbox{#1!4}{%
    \begin{minipage}{\dimexpr\linewidth-2\fboxsep\relax}
     \small\raggedright #3
    \end{minipage}}%
   \par\nointerlineskip
   \noindent\begin{tikzpicture}
    \draw[color=#1!65,line width=0.4pt,dash pattern=on 2.2pt off 2.2pt]
     (0,0)--(\linewidth,0);
   \end{tikzpicture}%
   \par\nointerlineskip
   \noindent\colorbox{#1!4}{%
    \begin{minipage}{\dimexpr\linewidth-2\fboxsep\relax}
     \small\raggedright #4
    \end{minipage}}%
  \end{minipage}}%
 \endgroup
}

\newcommand{\vlmexampleframe}[5]{%
 \begin{minipage}[t]{0.238\linewidth}
  \centering
  {\scriptsize\color{vlmmuted}Frame #1 / Chunk #2\par}
  \vspace{3pt}
  \begingroup
  \setlength{\fboxsep}{0pt}%
  \setlength{\fboxrule}{0.5pt}%
  \fcolorbox{#5}{white}{%
   \includegraphics[width=\dimexpr\linewidth-2\fboxrule\relax]{#3}}%
  \endgroup
  \par\vspace{3pt}
  {\small\color{#5}#4\par}
 \end{minipage}%
}

%% file: tables/vlm_inputs.tex
\begin{table}[!ht]
 \caption{\textbf{VLM inputs.} Sources and representation for a single rollout request.}
 \label{tab:vlm_inputs}
 \centering\small
 \renewcommand{\arraystretch}{1.12}
 \begin{tabularx}{\linewidth}{@{}>{\raggedright\arraybackslash}p{0.17\linewidth}>{\raggedright\arraybackslash}p{0.23\linewidth}>{\raggedright\arraybackslash}X@{}}
 \toprule
 Input & Source & Representation\\
 \midrule
 Task context & Task configuration and rollout metadata & Natural-language instruction, task clarification, executed chunk count, and initial action budget.\\
 Camera streams & Head and two wrist cameras & Three synchronized video sequences. Native $320\times240$ RGB frames are extended to $320\times288$ with camera, frame, and chunk labels. Sampling uses the initial frame and up to six frames per chunk.\\
 Robot state & Robot observation records & Gripper commands, measured openings, and end-effector poses at each sampled frame. Joint positions and velocities are included at the initial frame and chunk ends. Values are serialized as JSON text.\\
 Temporal alignment & Rollout recorder & Frame and chunk indices, within-chunk progress, and simulation timestamps.\\
 \bottomrule
 \end{tabularx}
\end{table}

%% file: tables/vlm_corpus_split.tex
\begin{table}[!ht]
 \caption{\textbf{Annotated video corpus.} Video counts by environment outcome and data split.}
 \label{tab:vlm_corpus_split}
 \centering\small
 \renewcommand{\arraystretch}{1.08}
 \begin{tabular}{@{}lrrrr@{}}
 \toprule
 Video type & Training & Validation & Test & Total\\
 \midrule
 Failed rollouts & 640 & 128 & 256 & 1,024\\
 Successful references & 118 & 5 & 4 & 127\\
 \midrule
 Total & 758 & 133 & 260 & 1,151\\
 \bottomrule
 \end{tabular}
\end{table}

%% file: figures/vlm_dataset_example.tex
\begin{figure}[!ht]
 \centering
 \begin{minipage}{\linewidth}
  \small
  \noindent\textbf{(a) Video excerpt and instruction}\hfill\textcolor{vlmmuted}{Stack bowls}
  \par\vspace{5pt}
  \noindent``Stack the ceramic bowl carefully over the blue bowl with slightly rounded base.''
  \par\vspace{9pt}
  \noindent
  \vlmexampleframe{36}{5}{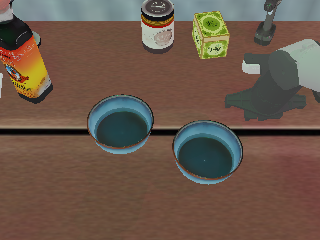}{Pickup attempt}{vlmmuted}\hfill
  \vlmexampleframe{69}{11}{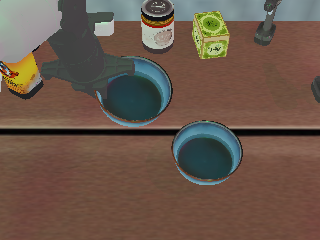}{Bowl grasped}{vlmprompt}\hfill
  \vlmexampleframe{78}{12}{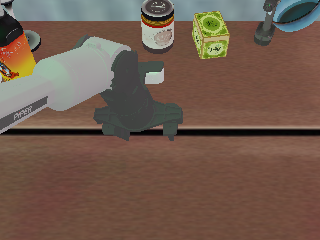}{Placement error}{vlmerror}\hfill
  \vlmexampleframe{96}{15}{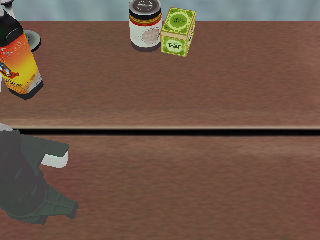}{Rollout end}{vlmmuted}
  \par\vspace{10pt}
  \vlmexamplecard{vlmprompt}{(b) Event annotation}{%
   \renewcommand{\arraystretch}{1.15}%
   \begin{tabularx}{\linewidth}{@{}lXlX@{}}
    \textbf{Observed outcome} & Failure & \textbf{Grasp established} & Yes\\
    \textbf{Error type} & Placement failure & \textbf{Later recovery} & Not observed\\
    \textbf{Error interval} & \textcolor{vlmerror}{Chunks [12, 12]} & \textbf{Final holder} & Neither arm
   \end{tabularx}%
  }{%
   \textbf{Summary.} After the initial grasping difficulty is resolved, releasing the bowl during stacking displaces the bowls. No subsequent recovery is observed.
  }
 \end{minipage}
 \caption{\textbf{An annotated rollout.} Selected head-camera frames and event labels from the video corpus. The interval identifies the placement error after the initial grasping difficulty is resolved. Chunk indices are zero-based.}
 \label{fig:vlm_dataset_example}
\end{figure}

%% file: tables/vlm_training_config.tex
\begin{table}[!t]
 \caption{\textbf{LoRA training configuration.} Settings for the shared VLM event adapter.}
 \label{tab:vlm_training_config}
 \centering\small
 \renewcommand{\arraystretch}{1.10}
 \begin{tabularx}{\linewidth}{@{}>{\raggedright\arraybackslash}p{0.32\linewidth}>{\raggedright\arraybackslash}X@{}}
 \toprule
 Parameter & Setting\\
 \midrule
 \multicolumn{2}{@{}l}{\textit{Adapter}}\\
 Adaptation scope & One shared adapter across tasks\\
 Target layers & \texttt{q\_proj}, \texttt{k\_proj}, \texttt{v\_proj}, \texttt{o\_proj}, \texttt{in\_proj\_qkv}, \texttt{in\_proj\_z}, \texttt{in\_proj\_b}, \texttt{in\_proj\_a}, \texttt{out\_proj}\\
 Rank & 8\\
 LoRA scaling parameter & 16\\
 Dropout & 0\\
 Frozen components & Base model, vision encoder, and projector\\
 \midrule
 \multicolumn{2}{@{}l}{\textit{Optimization}}\\
 Optimizer & AdamW\\
 Learning rate & $2\times10^{-5}$\\
 Weight decay & 0\\
 Gradient clipping & Global norm capped at 1\\
 Effective batch size & 20 trajectories\\
 Compute precision & Bfloat16\\
 Loss normalization & Mean over response tokens within each trajectory, then over trajectories in the batch\\
 \bottomrule
 \end{tabularx}
\end{table}

%% file: tables/event_localization.tex
\begin{table}[!htb]
 \caption{\textbf{Seven-task event localization.} Interval IoU (\%) against versioned reference intervals on 220 localizable, unrecovered test failures from the tasks in the policy comparison. Update 304 is the validation-selected checkpoint. Scores measure label agreement under the evaluation protocol in Appendix~\ref{app:vlm8_training}.}
 \label{tab:vlm8_localization}
 \centering\small
 \begin{tabular}{lrrrrr}
 \toprule
 Task & $n$ & Base & Epoch 3 & Epoch 6 & Epoch 8\\
 \midrule
 beat\_block\_hammer & 32 & 34.0 & 21.9 & 57.8 & 60.9\\
 handover\_block & 32 & 36.5 & 43.8 & 70.3 & 73.4\\
 lift\_pot & 32 & 35.7 & 60.9 & 77.6 & 80.7\\
 pick\_dual\_bottles & 32 & 33.3 & 70.3 & 75.0 & 73.4\\
 place\_container\_plate & 31 & 31.7 & 58.1 & 79.0 & 77.4\\
 place\_empty\_cup & 32 & 24.8 & 53.1 & 59.4 & 72.9\\
 stack\_bowls\_two & 29 & 3.8 & 48.3 & 55.2 & 65.5\\
 \midrule
 Task-macro mean & 220 & 28.5 & 50.9 & 67.8 & 72.1\\
 \bottomrule
 \end{tabular}
\end{table}

%% file: figures/vlm_localization_example.tex
\begin{figure}[!htb]
 \centering
 \includegraphics[width=\linewidth]{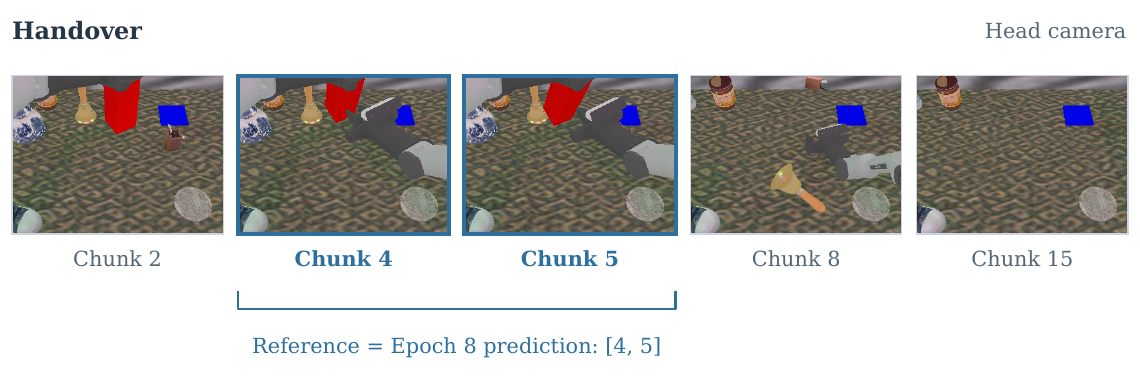}
 \caption{\textbf{Event localization on a test rollout.} Head-camera frames of a failed Handover. Blue outlines mark the interval identified by both the reference annotation and the selected model. Chunk indices are zero-based.}
 \label{fig:vlm_localization_example}
\end{figure}

%% file: figures/vlm_checkpoint_intervals.tex
\begin{figure}[t]
 \centering
 \includegraphics[width=0.60\linewidth]{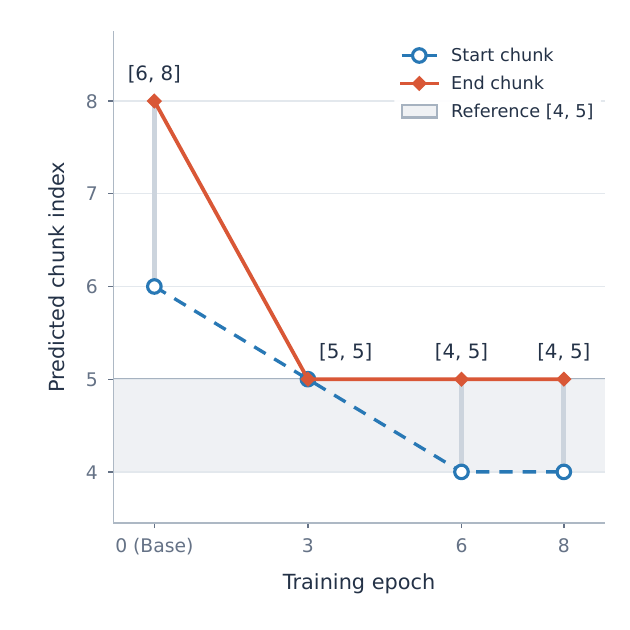}
 \caption{\textbf{Predicted intervals across checkpoints.} Interval endpoints for the Handover rollout in Figure~\ref{fig:vlm_localization_example}. The shaded band marks the reference interval $[4,5]$.}
 \label{fig:vlm_checkpoint_intervals}
\end{figure}

%% file: tables/policy_learning_curves.tex
\begin{table}[ht]
 \caption{\textbf{Scheduled policy evaluations.} Combined success (\%) at each scheduled update. Table~\ref{tab:eight_task_policy} reports update 40 and the per-run maximum over updates $5,10,\ldots,40$. SFT evaluations at update 0 use the same task checkpoint in separate simulator executions. The dash denotes the unavailable initial Handover MART evaluation.}
 \label{tab:epoch8_trajectories}
 \centering\small
 \setlength{\tabcolsep}{1.6pt}
 \begin{tabular*}{\linewidth}{@{\extracolsep{\fill}}l*{7}{rr}@{}}
 \toprule
 & \multicolumn{2}{c}{Hammer} & \multicolumn{2}{c}{Handover}
 & \multicolumn{2}{c}{Lift pot} & \multicolumn{2}{c}{Dual bottles}
 & \multicolumn{2}{c}{Container} & \multicolumn{2}{c}{Empty cup}
 & \multicolumn{2}{c}{Stack bowls}\\
 \cmidrule(lr){2-3}\cmidrule(lr){4-5}\cmidrule(lr){6-7}\cmidrule(lr){8-9}
 \cmidrule(lr){10-11}\cmidrule(lr){12-13}\cmidrule(lr){14-15}
 Update & Full & MART & Full & MART & Full & MART & Full & MART
 & Full & MART & Full & MART & Full & MART\\
 \midrule
 0 & 30.5 & 30.9 & 21.1 & \textemdash & 5.5 & 5.9 & 20.7 & 20.3 & 87.1 & 86.3 & 80.9 & 80.9 & 39.1 & 37.1\\
 5 & 35.5 & 42.6 & 24.6 & 25.8 & 20.7 & 24.6 & 21.5 & 23.0 & 91.4 & 88.7 & 82.8 & 84.4 & 46.1 & 40.2\\
 10 & 48.0 & 49.6 & 32.0 & 25.4 & 29.3 & 31.2 & 32.0 & 25.0 & 91.4 & 91.4 & 86.7 & 87.5 & 47.7 & 52.3\\
 15 & 46.9 & 63.3 & 30.5 & 39.1 & 35.9 & 38.3 & 30.1 & 34.4 & 93.8 & 94.9 & 83.6 & 89.5 & 56.2 & 56.2\\
 20 & 53.9 & 69.1 & 31.2 & 35.2 & 44.5 & 42.2 & 35.9 & 37.1 & 92.6 & 94.1 & 85.9 & 89.5 & 57.0 & 59.0\\
 25 & 63.7 & 77.0 & 34.4 & 37.1 & 46.1 & 48.8 & 44.1 & 37.9 & 93.0 & 94.1 & 90.6 & 87.5 & 56.2 & 60.9\\
 30 & 65.2 & 78.1 & 35.5 & 42.6 & 52.0 & 58.2 & 43.0 & 41.4 & 93.8 & 92.6 & 89.8 & 88.3 & 60.2 & 59.8\\
 35 & 74.6 & 80.5 & 33.6 & 44.5 & 54.3 & 58.6 & 41.4 & 51.2 & 94.9 & 94.1 & 89.5 & 87.1 & 63.3 & 67.2\\
 40 & 74.2 & \textbf{81.2} & 34.8 & \textbf{43.8} & 58.2 & \textbf{58.6} & 44.9 & \textbf{48.8} & \textbf{93.8} & 91.8 & 89.1 & \textbf{91.4} & 60.2 & \textbf{64.8}\\
 \bottomrule
 \end{tabular*}
\end{table}

%% file: tables/feedback_usage.tex
\begin{table}[t]
 \caption{\textbf{What the event mask removes.} Pooled over all 40 updates of each MART run. $N_-$ counts accepted failed trajectories. Shortened denotes a boundary before the final chunk. Retained mass is the ratio of summed negative coefficient mass after and before masking. Fallbacks retain full feedback. These statistics quantify loss weights.}
 \label{tab:feedback_usage}
 \centering\small
 \begin{tabular*}{\linewidth}{@{\extracolsep{\fill}}lrrrr@{}}
 \toprule
 Task & $N_-$ & Shortened (\%) & Retained mass (\%) & Fallback (\%)\\
 \midrule
Hammer & 8,304 & 90.6 & 49.2 & 8.3\\
Handover & 12,955 & 99.4 & 32.7 & 0.5\\
Lift pot & 11,577 & 99.7 & 47.4 & 0.2\\
Dual bottles & 13,296 & 97.9 & 59.3 & 1.9\\
Container on Plate & 4,364 & 97.8 & 51.8 & 2.0\\
Empty cup & 5,612 & 94.7 & 53.6 & 4.8\\
Stack bowls & 10,176 & 96.4 & 37.9 & 2.7\\
 \bottomrule
 \end{tabular*}
\end{table}

%% file: figures/feedback_ablations.tex
\begin{figure}[!ht]
 \centering
 \includegraphics[width=0.90\linewidth]{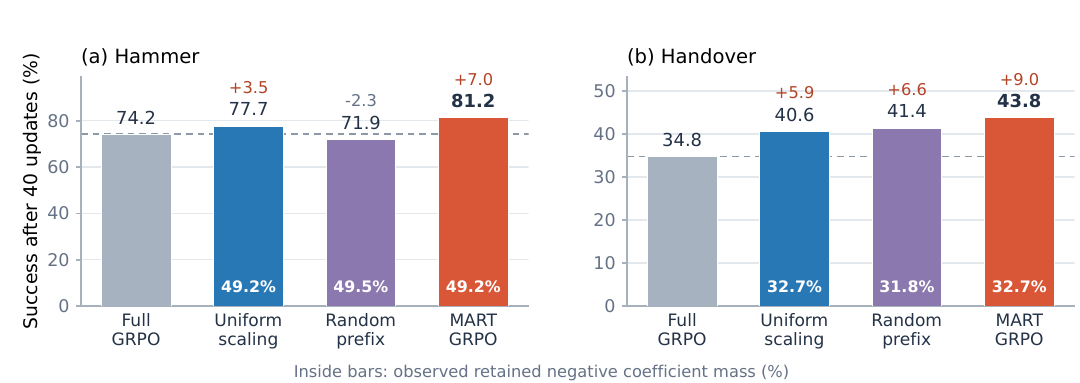}
 \caption{\textbf{Feedback ablations.} Success at update 40, with dashed Full GRPO references and percentage-point changes relative to Full GRPO above the bars. Percentages inside the bars show each run's observed retained negative coefficient mass, pooled over all 40 updates.}
 \label{fig:feedback_ablations}
\end{figure}